%
%


\documentclass[journal]{IEEEtran}

\usepackage{xcolor,soul,framed} 

\colorlet{shadecolor}{yellow}
\usepackage[pdftex]{graphicx}
\graphicspath{{../pdf/}{../jpeg/}}
\DeclareGraphicsExtensions{.pdf,.jpeg,.png}

\usepackage{afterpage}

\usepackage[cmex10]{amsmath}
\usepackage{amsthm}
\newtheorem{thm}{Prop}
\newtheorem*{remark}{Remark}

\usepackage{enumitem}

\usepackage{array}
\usepackage{mdwmath}
\usepackage{mdwtab}
\usepackage{eqparbox}
\usepackage{url}
\usepackage{booktabs}  
\usepackage{multirow}  

\usepackage{amsmath,amssymb,amsfonts}
\usepackage{algorithm, algorithmic}
\usepackage{graphicx}
\usepackage{textcomp}
\usepackage{xcolor}
\usepackage{subfigure}
\usepackage{multirow}
\usepackage{makecell}
\usepackage{bm}
\usepackage{tabularx}
\usepackage{booktabs}

\newcommand{\subl}[1]{{#1}_{\mathrm{left}}}

\newcommand{\subr}[1]{{#1}_{\mathrm{right}}}

\newcommand{\Ll}[0]{\subl{L}}
\newcommand{\Lc}[0]{L}
\newcommand{\Lr}[0]{\subr{L}}
\newcommand{\Fl}[0]{\subl{F}}
\newcommand{\Fc}[0]{F}
\newcommand{\Fr}[0]{\subr{F}}
\newcommand{\gl}[0]{\subl{g}}
\newcommand{\gc}[0]{g}
\newcommand{\gr}[0]{\subr{g}}
\newcommand{\bgl}[0]{\subl{bg}}
\newcommand{\bgc}[0]{bg}
\newcommand{\bgr}[0]{\subr{bg}}

\newcommand{\bLl}[0]{d_{\Ll}}
\newcommand{\bLc}[0]{d_{\Lc}}

\newcommand{\bFl}[0]{d_{\Fl}}

\newcommand{\bE}[0]{d_E}

\newcommand{\vsafel}[0]{v^\mathrm{s}_\mathrm{left}}
\newcommand{\vsafec}[0]{v^\mathrm{s}}
\newcommand{\vsafer}[0]{v^\mathrm{s}_\mathrm{right}}

\newcommand{\asafec}[0]{a^\mathrm{s}}

\newcommand{\vstarl}[0]{v^*_\mathrm{left}}
\newcommand{\vstarc}[0]{v^*}
\newcommand{\vstarr}[0]{v^*_\mathrm{right}}

\newcommand{\spdlim}[0]{\operatorname{spdlim}}

\newcommand{\Reff}[0]{R^{}_{\mathrm{eff}}}
\newcommand{\Rcomf}[0]{R^{}_{\mathrm{comf}}}
\newcommand{\Rlc}[0]{R^{}_{\mathrm{discr}}}
\newcommand{\Rmlc}[0]{R^{}_{\mathrm{route}}}

\newcommand{\weight}[0]{\xi}

\newcommand{\wcomf}[0]{\weight^{}_{\mathrm{comf}}}

\newcommand{\wlc}[0]{\weight^{}_{\mathrm{discr}}}
\newcommand{\wmlc}[0]{\weight^{}_{\mathrm{route}}}

\newcommand{\dt}[0]{r}
\newcommand{\rem}{\operatorname{Rem}}

\newcommand{\clip}[3]{\operatorname{clip}(#1,#2,#3)}
\newcommand{\argmax}[0]{\operatorname{argmax}}

\newcommand{\method}[0]{SECRM-2D}

\usepackage[hidelinks]{hyperref}
\usepackage{cite}


\begin{document}
\title{SECRM-2D: RL-Based Efficient and Comfortable Route-Following Autonomous Driving with Analytic Safety Guarantees}

  \author{Tianyu Shi,
      Ilia Smirnov,
      Omar ElSamadisy,
      Baher Abdulhai
\vspace{-0.5cm}
\thanks{The authors are with the Dept. of Civil and Mineral Engineering. University of Toronto, Canada. ty.shi@mail.utoronto.ca, ilia.smirnov@utoronto.ca, omar.elsamadisy@mail.utoronto.ca, baher.abdulhai@utoronto.ca}%
\thanks{Omar ElSamadisy is on leave with Dept. of Electronics \& Communications Engineering, Arab Academy for Science and Technology, Egypt. omar.elsamadisy@aast.edu}
}



\maketitle

\begin{abstract}
Over the last decade, there has been increasing interest in autonomous driving systems. Reinforcement Learning (RL) shows great promise for training autonomous driving controllers, being able to directly optimize a combination of criteria such as efficiency comfort, and stability. However, RL-based controllers typically offer no safety guarantees, making their  readiness for real deployment questionable. In this paper, we propose \method~(the Safe, Efficient and Comfortable RL-based driving Model with Lane-Changing), an RL autonomous driving controller (both longitudinal and lateral) that balances optimization of efficiency and comfort and follows a fixed route, while being subject to hard analytic safety constraints. The aforementioned safety constraints are derived from the criterion that the follower vehicle must have sufficient headway to be able to avoid a crash if the leader vehicle brakes suddenly. We evaluate \method~against several learning and non-learning baselines in simulated test scenarios, including  freeway driving, exiting, merging, and emergency braking. Our results confirm that representative previously-published RL AV controllers may crash in both training and testing, even if they are optimizing a safety objective. By contrast, our controller \method~is successful in avoiding crashes during both training and testing, improves over the baselines in measures of efficiency and comfort, and is more faithful in following the prescribed route. In addition, we achieve a good theoretical understanding of the longitudinal steady-state of a collection of \method~vehicles.
\end{abstract}

\begin{IEEEkeywords}
Autonomous Driving, Safe Reinforcement Learning, Safety Analysis
\end{IEEEkeywords}

%


\section{Introduction}


Achieving autonomous driving is an important goal in the development of intelligent transportation systems. Reinforcement Learning (RL) is a general optimization technique that directly optimizes a particular objective, without requiring any special problem structure (such as linearity or convexity). RL is becoming mature, and RL-based systems have been able to surpass human performance in several domains, such as game-playing \cite{dqn} and data-center cooling \cite{lazic2018data} in recent years. Because autonomous driving involves optimization of a diverse combination of objectives, including efficiency, comfort, route-following, and safety, RL is an attractive approach to training autonomous vehicle (AV)  controllers (more explicit definitions of the optimization objectives chosen for this work will be given in Sections~\ref{sec:formulation} and~\ref{sec:method}). 

In addition, model-free RL algorithms can learn from data without requiring an explicit model of the environment. Formulating a model often requires introducing crucial simplifications and heuristics. By contrast, model-free RL-based control algorithms have the potential to respond to  hard-to-model problem dynamics.

However, RL controllers and especially Deep RL controllers (that involve neural networks as function approximators in order to approach large-scale and complex problems) can be difficult to interpret. The black-box nature of the controllers makes certifiable behaviour difficult to achieve with the standard methods. One of the main current issues in applying RL to car following and lane change control is the need to ensure safety~\cite{li2022decision}. 
Developing robust safety mechanisms is crucial to prevent hazardous situations. 


In previous work SECRM~\cite{secrm}, we developed a longitudinal car-following RL controller with safety guarantees. In this paper, SECRM is extended with lateral (lane-changing) control actions. Lane-change maneuvers may be categorized as being either discretionary or mandatory. A lane change maneuver is defined as discretionary when it is not explicitly required or mandated by external factors such as road closures or merging lanes. In discretionary lane changes, the decision to change lanes is primarily based on goals such as optimizing travel time, improving comfort, or adapting to traffic conditions~\cite{zhang2019discretionary}. By contrast, lane changes that are required by external factors are defined as mandatory~\cite{xi2020efficient}. We are the first to propose an RL controller that can perform both discretionary and mandatory lane changes. The main contributions are:
\begin{itemize}[wide]
    \item We propose 
    \method, an RL-based controller for autonomous driving, which optimizes efficiency and comfort, and follows a prescribed route, while incorporating hard analytic safety constraints.

\item Longitudinal safety constraints from \cite{secrm} are extended to formulate lane-change safety constraints.

\item Discretionary and mandatory lane changes are combined into a single unified optimization objective. To the best of our knowledge, this is the first RL AV controller to handle both discretionary and mandatory lane changing on equal footing.


\item In addition, the steady-state behavior of a group of \method~ vehicles is studied. We demonstrate that these vehicles naturally form a steady-speed platoon as a consequence of efficiency optimization. An explicit formula is derived for the gap between vehicles, as a function of platoon speed and parameters like maximal braking rate and reaction time. The findings are validated in simulation.
\end{itemize}
This work advances the promise of RL to train performant AV controllers, while incorporating strong, explainable and  explicit safety constraints.



Many researchers investigated lane change topics. For example, Pan et al.~introduced a mesoscopic multilane freeway model integrating both MLC and DLC maneuvers which was further calibrated using real-world data \cite{pan2016modeling}. Furthermore, Shi et al.~proposed a hierarchical structure based on reinforcement learning to make the autonomous vehicle learn to decide when to lane change and how to lane change \cite{shi2019driving}.  Hoel et al. combined planning and learning with Monte Carlo Tree Search for the highway exit case \cite{hoel2019combining}. However, they haven't given enough consideration to the safety issue.

Furthermore, Cao et al.~investigated a learning-enhanced highway-exit planner to adapt to urgent exiting cases. They proposed safety and traffic rule constraints to construct the safe margin of speed \cite{cao2020highway}. Ye et al.~integrates penalty for collision and near-collision into reward function to enhance safety \cite{ye2020automated}. Udatha et al.~embed probabilistic control barrier functions into reinforcement learning to improve safety in ramp merging \cite{udatha2023reinforcement}. However, they cannot adapt their safety formulation to both mandatory and discretionary lane change, and they haven't provided much theoretical understanding of the safety criteria.

\section{Problem formulation}\label{sec:formulation}
In this section, the problem to be solved is formulated. After an informal discussion, the notation is set and the requirements are formalized.

Our goal is to train a controller for a single vehicle that is driving along a road consisting of one or multiple lanes. The controller acts in discrete regular time-steps (typically every 0.1s). The duration of a single time-step is the controller's reaction time $r$. We measure time in units of $r$. Every time-step, the controller sets both the vehicle's continuous \emph{longitudinal (within-lane) acceleration} and the discrete \emph{lateral (lane-change) movement} (either keep on the current lane, shift one lane to the left, or shift one lane to the right).

It is assumed that the controlled vehicle obtains a fixed route from an independent routing module. There are many possible choices for the routing module:  for example, this could be a centralized module that routes multiple vehicles to optimize network-level metrics (for example, a module based on Google Maps), or a decentralized module that simply selects the shortest-distance route from origin to destination.

The controlled vehicle performs both \emph{discretionary} lane-changes undertaken to improve its state (for example, lane-changing to pass a slow leader vehicle), and \emph{mandatory} lane-changes that are necessary for the vehicle to follow the fixed route (e.g. at bifurcations).

The controller operates under hard safety constraints that are derived from the Vienna convention on road traffic, similar to the analysis carried out in \cite{vienna}. The Vienna convention  defines a vehicle to be safe if and only if the gap between the vehicle and its leader vehicle is positive, and in the event that its leader starts braking with its maximal deceleration, the gap between the vehicle and its leader is sufficient for the vehicle to be able to react and stop without crashing. More restrictively, the gap between the leader and the ego vehicle will be required to be at least $\epsilon$ meters after coming to a stop, where $\epsilon$ is a parameter (typically, 2 m).

In the literature, it is common to define safety criteria based on a Time-to-Collision (TTC) threshold \cite{zhu2020safe, yen2020proactive, shi2022bilateral}. We have argued in favor of the Vienna-convention-based criteria over TTC-threshold-based criteria in previous work \cite{secrm}. A TTC threshold criterion is not sufficient to provide safety guarantees in the worst case and is based on an ad-hoc threshold choice.

The overall objective is to maximize a weighted combination of measures of efficiency and comfort, subject to obeying the aforementioned hard safety constraints and following the prescribed route communicated to the controlled vehicle. Discomfort is measured by the vehicle's jerk, while efficiency is measured by the vehicle's speed.

We split the task of maximizing efficiency (i.e. maximizing the vehicle's speed) into two sub-tasks: 1) driving as closely as possible to the maximal safe speed within the vehicle's current lane, 2) if it is possible to increase the maximal safe speed by changing lanes, carrying out the lane change, subject to lane-change safety constraints formulated below.

To connect our notion of efficiency (maximizing speed) with that commonly found in the literature (minimizing headway), we note that if the ego vehicle is following a leader closely enough that the ego vehicle's speed is constrained by the leader (and not the speed limit), driving at the maximal longitudinal safe speed is equivalent to driving at the lowest safe headway, so that maximizing the speed is equivalent to minimizing the front-gap subject to safety constraints.

We now proceed to set the notation used throughout the paper and make the above requirements more formal.

\subsection{Notation}

\afterpage{
    \begin{figure}[t!]
    \centering
    {\includegraphics[width=0.9\linewidth]{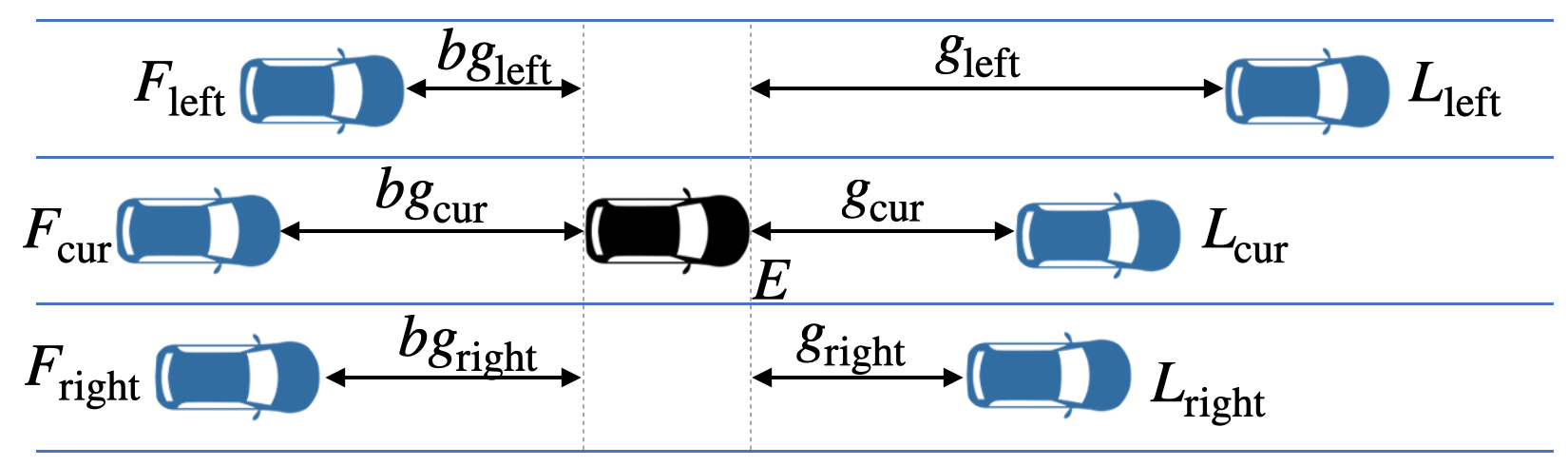}}
    \caption{Notation} 
    \vspace{-0.5cm}
    \label{fig:vars}
    \end{figure}
}

The  notation is illustrated in Figure~\ref{fig:vars}.
The ego vehicle is denoted as $E$, with the leader vehicle $\Lc$ in front and the follower vehicle $\Fc$ behind on the same lane. For a potential left lane change, we define the left-leader and left-follower as $\Ll$ and $\Fl$, respectively. Analogously, the right-leader and right-follower are denoted by $\Lr$ and $\Fr$ for a potential right lane change.

The distances are denoted as follows: front-gap between $E$ and $\Lc$ as $\gc$, back-gap between $E$ and $\Fc$ as $\bgc$. For the left lane, the front-gap to $\Ll$ and back-gap to $\Fl$ are $\gl$ and $\bgl$, respectively; for the right lane, these gaps to $\Lr$ and $\Fr$ are $\gr$ and $\bgr$.

In the event that there is no leader or follower, the appropriate gap is set to $\infty$. In the event that $E$ is driving along the left edge of the road, we set $\gl = \bgl = -1$, and analogously if $E$ is driving along the right edge, we set $\gr = \bgr = -1$.

The maximal acceleration is denoted by $a_\bullet$, where $\bullet$ denotes one of the vehicles defined above. For example, $a_E$ is the maximal acceleration of the ego vehicle. Similarly, the maximal deceleration is denoted by $d_\bullet$; by convention, $d_\bullet$ is a positive number. The number $r_\bullet$ denotes the reaction time of the corresponding vehicle. For simplicity, we write $r_E = r$. The current speed limit is denoted by $\spdlim_\bullet$.

\subsection{Safety constraint formulation} The longitudinal safety constraint is derived as in \cite{gipps} and \cite{secrm}.
Let the ego vehicle's reaction time $r$, minimal stopped gap $\epsilon$ and speed $v_E(t+1)$ in the next time-step be given. Using the laws of kinematics, it may be derived that: the ego vehicle has a sufficiently large front gap to be able to stop safely in the event of an emergency stop by the leader if and only if
\begin{equation} \label{eq:safe-gap}
    \gc(t) \geq \frac{v_E(t) + v_E(t+1)}{2}\,r + \frac{v_E^2(t+1)}{2\bE} - \frac{v_{\Lc}^2(t)}{2\bLc} + \epsilon
\end{equation}
To avoid negative values on the right-hand side of  Equation~\ref{eq:safe-gap}, in this work we adopt the \emph{defensive principle} of assuming that $\bE \leq d_{L_\bullet}$ for any leader vehicle $L_\bullet$.

Using Equation~\ref{eq:safe-gap}, treating $\gc(t)$ as known and $v_E(t+1)$ as unknown, it follows from properties of quadratic polynomials that the ego vehicle's maximal safe speed at the next time-step is given by
\begin{equation} \label{eq:safe-speed}
\resizebox{.9\hsize}{!}{%
$
    -\frac{r\bE}{2} + \sqrt{\left(\frac{r\bE}{2}\right)^2 -2\bE \left( \frac{rv_E(t)}{2} - \frac{v_{\Lc}^2(t)}{2\bLc} - \gc(t) + \epsilon \right)}
$}
\end{equation}
The maximal safe speed expressed in Equation~\ref{eq:safe-speed} is denoted by $\vsafec(t+1)$.
If the ego vehicle satisfies $v_E(t) \leq \vsafec(t)$ for all $t$, then (\ref{eq:safe-gap}) is satisfied for all $t$, and therefore the front-gap is safe for all $t$.

Using similar derivations for imaginary copies of $E$ shifted one lane to the left or right, we obtain analogous expressions for maximal safe speeds $\vsafel(t+1)$ and $\vsafer(t+1)$ on the adjacent lanes. The expressions have the same form as (\ref{eq:safe-speed}), with $v_{\Lc}$ and $\gc$ replaced by the appropriate analogues.

We now formulate the safety constraints that must be satisfied by \method:

\subsubsection*{Acceleration safety constraint} For all time-steps $t$, the ego vehicle's speed must satisfy $v_E(t) \leq \vsafec(t) $. Equivalently, the ego vehicle's acceleration must satisfy
\begin{equation} \label{eq:acc-constraint}
    a_E(t) \leq \asafec(t) := \frac{\vsafec(t+1) - v_E(t)}{r} \quad \mbox{for all $t$}
\end{equation}
\subsubsection*{Lane-change safety constraints} The ego vehicle is not permitted to perform a left lane-change if doing so would violate the safe-gap inequality (\ref{eq:safe-gap}) for either the $(\Fl, E)$ or the $(E, \Ll)$ vehicle pair.

To elaborate further, we define the constraints for a left lane-change. This involves the assumption that both the ego vehicle and the left-follower maintain their current speeds in the subsequent time step. Consequently, this assumption simplifies equation (\ref{eq:safe-gap}) to a set of two conditions
\begin{equation} \label{eq:lc-constraint}
    \begin{split}
         \gl(t) &\geq v_E(t)\,r + \frac{v_E^2(t)}{2\bE} - \frac{v_{\Ll}^2(t)}{2\bLl} + \epsilon \\
         \bgl(t) &\geq v_{\Fl}(t)\,r^{}_{\Fl} + \frac{v_{\Fl}^2(t)}{2\bFl} - \frac{v_E^2(t)}{2\bE} + \epsilon
    \end{split}
\end{equation}

If either of the inequalities in (\ref{eq:lc-constraint}) does not hold at time-step $t$, then the left lane-change action is prohibited at time-step $t$. The right lane-change action is subject to analogous constraints, replacing $\Fl$ by $\Fr$ and $\Ll$ by $\Lr$ in the inequalities.

\subsection{Efficiency and target speeds}
As motivated above, the efficiency of a vehicle is measured by its average speed. A vehicle's speed is bounded above by two separate factors. First, the speed is bounded above by the road's speed limit. Second, in the presence of a close leader the speed of the ego vehicle is bounded above by $\vsafec(t)$. Motivated by this, the target speeds are defined as
\begin{equation} \label{eq:target-speeds}
\centering
\begin{split}
    \vstarl(t) &= \min(\vsafel(t), \spdlim_E(t)) \\
    \vstarc(t) &= \min(\vsafec(t), \spdlim_E(t)) \\
    \vstarr(t) &= \min(\vsafer(t), \spdlim_E(t))
\end{split}
\end{equation}
Timestep-by-timestep inefficiency is measured by the difference of the current speed with the highest of the three target speeds in Equation~\ref{eq:target-speeds}. We note that the efficiency objective handles both car-following mode (when there is a close enough leader that constrains the ego vehicle's speed) and speed-control mode (when the ego vehicle's speed is constrained by the speed limit).

\subsection{Comfort definition}
To increase the comfort of the passengers, the controller seeks to minimize the cumulative squared jerk over the controlled vehicle's trajectory. The cumulative squared jerk over a trajectory with horizon $T$ is given by
\begin{equation}
\sum_{t=1}^{T} \left(\frac{a_E(t+1) - a_E(t)}{r}\right)^2
\end{equation}

\subsection{Route-following}
\label{route}
It is assumed that the vehicle is provided a fixed route, consisting of road sections $s_1, \dots, s_n$ indexed along the direction of motion. Each section $s_i$ consists of several lanes $\ell_{i1}, \dots, \ell_{im}$ indexed from the right edge of the section toward the left along the direction of motion. Each lane has a boolean label: whether or not the lane is part of the route. For example, for a freeway off-ramp, if the vehicle's route is to continue on the freeway, the lanes that lead to the  off-ramp are not part of its route. Figure~\ref{fig:route-spec} displays an example route in a roundabout (with lanes that are part of the route labeled with arrows).

A controlled vehicle that is far from the end of its current section may need to use lanes that are not part of its route in order to improve its efficiency, but the risk of missing the correct turn becomes higher the closer the vehicle is to the end of the section. The controlled vehicle must balance between improving efficiency through discretionary lane-changes and minimizing the risk of missing its route. The penalty is defined precisely in Section~\ref{sec:method}.

\afterpage{
\begin{figure}[t]
\centering
{\includegraphics[width=0.9\linewidth]{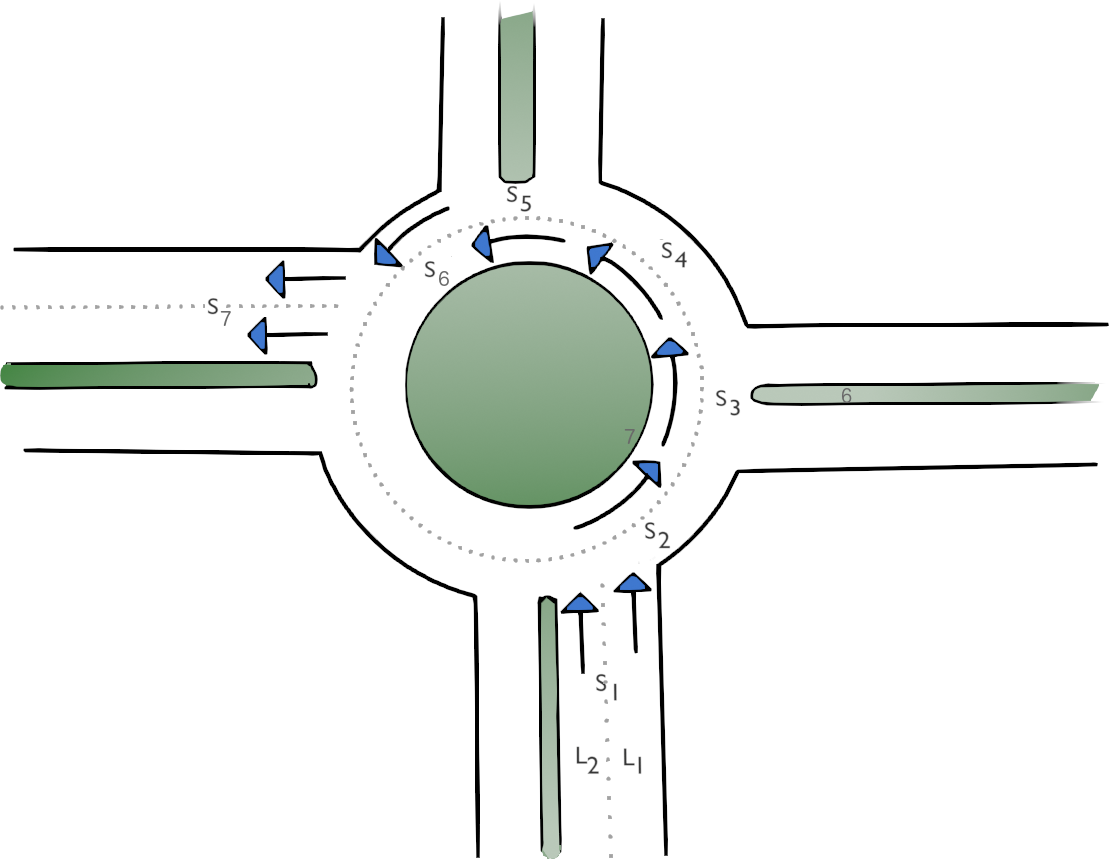}}
\caption{Example of a route passing through a roundabout. The lanes that are part of the route are indicated by an arrow. Thus, both lanes on the initial section $s_1$ and the final section $s_7$ are part of the route. For sections $s_2, s_3, s_4$ and $s_5$, only the interior lane is part of the route (as the vehicle does not take the first two exits). For section $s_6$, only the exterior lane is part of the route. The controlled vehicle may choose trajectories that use lanes that are not part of its route, but doing so will accrue a penalty (the form of the penalty is given in Section~\ref{sec:method}). Therefore, the learned controller must learn to balance between reward coming from discretionary lane-changes (for example, bypassing slow or broken down vehicles) and penalty accumulated from violating the route-following objective.}
\label{fig:route-spec}
\end{figure}
}

\section{\method}
\label{sec:method}
\subsection{Formalization of the control problem as an optimization problem in an SCMDP} 
Following the terminology of \cite{safe-rl-survey}, the problem formulated in Section~\ref{sec:formulation} may be formalized as an optimization problem within a Statewise-Constrained Markov Decision Process (SCMDP). By definition, an SCMDP is a tuple $(S, A, R, P, \gamma, \rho, C, w)$. Here, $S$ is a set called the state space, $A$ is a set called an action space, $R(s,a,s')$ is the reward model, $P(s'|s,a)$ is the transition model, $\gamma \in [0,1)$ is the discount factor, $\rho(s)$ is the initial state distribution, $C = (C_1, \dots, C_n)$ is a tuple of cost functions, where each $C_i$ is a function of $(s,a,s')$, and $w = (w_1, \dots, w_n)$ is a tuple of real-valued weights.

At each environment step, the agent starts from state $s \in S$, takes an action $a \in A$, transitions to state $s'$ with probability $P(s'|s,a)$, and receives reward $R(s,a,s')$. It is implicit in the notation that the transition model and reward model depend only on $s,a$ (and not on past history of states and actions), which is called the Markov assumption. The function $\rho$ describes the probability distribution of initial states.

A policy is a mapping $\pi(a|s)$ that sends a state to a probability distribution over actions. The tuples of cost functions and weights are used to restrict the set of feasible policies:
    \[ \Pi_{C,w} = \left\{ \pi \left| \begin{aligned}&\forall i \in \{1,\dots,n\}, \forall s \in S \quad C_i(s,a,s') \leq w_i \\ &\mbox{where } a \sim \pi(\cdot|s),\ s' \sim P(\cdot|s,a)\end{aligned}\right. \right\} \]
The agent's objective is to find a feasible policy ($\pi \in \Pi_{C,w}$) that maximizes the cumulative discounted reward 
\begin{equation}
\label{eq:J}
    J(\pi) = E_{\tau \sim \rho, \pi, P}\left( \sum_{t=0}^\infty \gamma^t R(s_t,a_t,s_{t+1}) \right)
\end{equation}
where $\tau = s_0,a_0,s_1,a_1,\dots$ denotes a state-action trajectory, and $\tau \sim \rho, \pi, P$ denotes that $s_0 \sim \rho$, $a_t \sim \pi(\cdot|s_t)$  and $s_{t+1} \sim P(\cdot|s_t,a_t)$. Therefore, $\gamma$ controls how myopic an agent is; $\gamma$ close to 0 makes the agent more short-sighted, while $\gamma$ close to 1 makes the agent plan over a longer effective horizon.

\subsection*{State space}
The ego vehicle's state representation is illustrated in Figure~\ref{fig:state-space2}. Only vehicles within a fixed scan radius $D$ are considered. For each lane within the scan range, we record features of $N_{front}$ closest vehicles per lane in front of the ego vehicle and $N_{back}$ closest vehicles per lane behind the ego vehicle.

For the ego vehicle, the state representation contains its longitudinal distance from the start of the current section, speed, previous value of acceleration, index of current section within route, lane index, lateral speed, and a boolean map over whether or not a lane with a given index is part of the ego vehicle's route on the current section. For the other vehicles, the representation is simpler: it contains the longitudinal distance from the ego vehicle, speed, and acceleration (the vehicles are sorted according to the lane they occupy and distance from the ego vehicle along their lane).
\afterpage{
\begin{figure}[t]
\centering
{\includegraphics[width=0.9\linewidth]{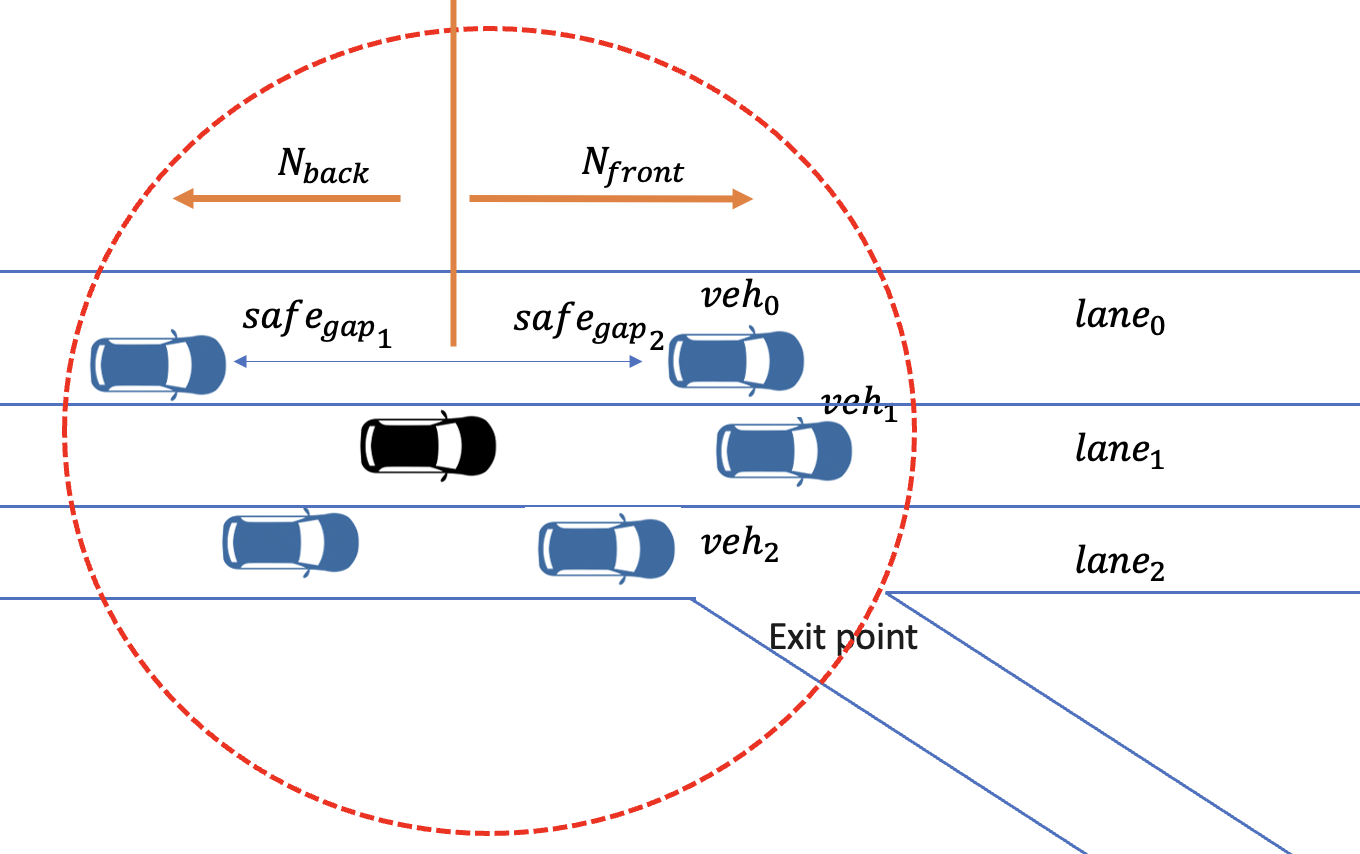}}
\caption{The state representation of the ego vehicle. The black vehicle is the ego vehicle; the blue vehicles are the surrounding vehicles.} 
\label{fig:state-space2}
\end{figure}
}

\subsection*{Action space} The action space factors as a 2-tuple $(\alpha,\ell)$, where $\alpha$ is an acceleration action and $\ell$ is a lane-change action.

The acceleration action is continuous and lies in the closed interval $[-\bE, a_E]$. The acceleration action is subject to additional statewise constraint (\ref{eq:acc-constraint}),
which is enforced through a cost function. The lane-change action is discrete, and in general lies in the set $\{\mbox{left, cur, right}\}$. The lane-changes are subject to constraints (\ref{eq:lc-constraint}), also enforced through cost functions (please see the example at the end of this section).

\subsection*{Reward} The reward is a weighted sum of four terms: 
\subsubsection*{Efficiency (longitudinal)} The inefficiency penalty is equal to the difference between the current speed and the target speed:
\begin{equation*} \label{eq:reward-eff}
    \Reff(s, a, s') = - \frac{\operatorname{abs}(\vstarc(s) - v_E(s))}{\vstarc(s)}
\end{equation*}
where $\operatorname{abs}(\cdot)$ denotes the absolute value function. (The notation $\vstarc(s)$ for the target speed on the current lane was introduced in Equation~\ref{eq:target-speeds}.)

When the ego vehicle's speed is constrained by the speed limit (the ego vehicle is in speed-control mode), speeding is allowed, but penalized.


\subsubsection*{Comfort} The passengers' discomfort is measured as the square of the jerk of the ego vehicle in the last time-step. We remind the reader that the jerk of a state-transition $s,a,s'$ is defined as
$j_E^{}(s') = (a_E(s')-a_E(s))/r$. The comfort penalty is then defined by
\begin{equation*} \label{eq:rew-comfort}
    \begin{split}
    \Rcomf(s,a,s') &= -\left( \frac{j_E(s')}{(a_E + \bE)/r} \right)^2 \\
    &= -\left(\frac{a_E(s') - a_E(s)}{a_E + \bE}\right)^2
    \end{split}
\end{equation*}
\subsubsection*{Discretionary lane-changing} Let $\ell$ denote the lane-change part of the action. The discretionary lane-change reward is given by
\begin{equation} \label{eq:dlc}
\resizebox{.9\hsize}{!}{%
    $\displaystyle \Rlc(s,a,s') = \frac{1}{\vstarc} 
    \begin{cases}
        C(\vstarl, \vstarc) (\vstarl - \vstarc), & \ell = \mathrm{left} \\
        C(\vstarr, \vstarc) (\vstarr - \vstarc), & \ell = \mathrm{right} \\
        0, & \ell = \mathrm{cur}
    \end{cases}$
    }
\end{equation}
where for any $v_0, v_1$ we define
\begin{align*}
    C(v_0,v_1) &= \frac{1 - \gamma^{T(v_0,v_1)}}{1-\gamma}, \quad \mbox{and where} \\
    T(v_0,v_1) &=  \operatorname{round}\left(\frac{\operatorname{abs}\left(v_0-v_1\right)}{a_E} \cdot \frac{1}{r} \right)
\end{align*}
and where the state-dependence of the target speeds is omitted for readability ($\gamma$ is the discount factor). 

We now motivate the chosen form of $\Rlc$.

We would like to reward the agent for changing to a lane with a higher target speed, and penalize the agent for changing to a lane with a lower target speed (as part of the efficiency optimization). This target-speed-maximization formulation compactly encodes the appropriate action for several different scenarios, including scenarios when there is a faster-moving leader on one of the adjacent lanes, and scenarios when there is no leader on an adjacent lane.

  For brevity, we shall discuss left lane-changes only; right lane-changes are analogous. The previous paragraph motivates the initial choice of $R = \vstarl - \vstarc$ (which may be negative) as the discretionary lane-changing reward. However, we observe that if the agent switches to a lane with $\vstarl > \vstarc$, in the consequent time-steps the agent will receive larger efficiency penalties (than it would have if the lane-change was not performed) until the speed difference $v_E - \vstarl$ is less than or equal to $v_E - \vstarc$. In other words, the cumulative reward the agent will receive may be negative even if the agent switches to a lane with a higher target speed.
  
  Therefore, the immediate lane-change reward is boosted to compensate for the penalty that gets accrued after performing the lane-change. The boost coefficient $C(v_0,v_1)$ may be chosen empirically. Instead, we approximate as follows. In the ideal case, it will take approximately $T = T(\vstarl, \vstarc)$ time-steps for the ego vehicle to make up the speed difference. Taking discounting into account, we choose the immediate reward to be $R + \gamma R + \gamma^2 R + \cdots + \gamma^{T-1} R = C(\vstarl, \vstarc) R$ (geometric series), which is large enough to offset the  cumulative efficiency penalty during subsequent catch-up.

\subsubsection*{Mandatory lane-changing (Route-following)}
Route-following is also achieved using a reward term. As described in Section \ref{sec:formulation}, the vehicle is provided a fixed route consisting of sections $s_1, \dots, s_n$, with each section $s_i$ consisting of several lanes $\ell_{i1}, \dots, \ell_{im}$. 

Let $s_i$ and $\ell_{ij}$ denote the current section and lane, respectively. Let $\Delta(\ell_{ij})$ denote the smallest number of lane changes that are necessary in order to reach an on-route lane from $\ell_{ij}$. Let $D(s)$ denote the current longitudinal distance to the end of section $s_i$. Then we define the immediate penalty from the current state as
\[ \Rmlc(s, a, s') = -\frac{\Delta(\ell_{ij})}{1 + D(s)} \]
For example, if the agent's route takes an off-ramp from a freeway at the end of its current section, but the agent is not currently in the exiting lane, then the agent gets a penalty inversely proportional to the distance to the exit and proportional to the number of lane changes necessary to reach the off-ramp lane. Please see Figure~\ref{fig:routing-penalty} for a visualization of the the penalties $\Rmlc$ for both routes in the freeway off-ramp scenario.

\begin{figure*}[!t]
\centering
\begin{minipage}{0.4\textwidth}        
    \includegraphics[width=0.75\textwidth]{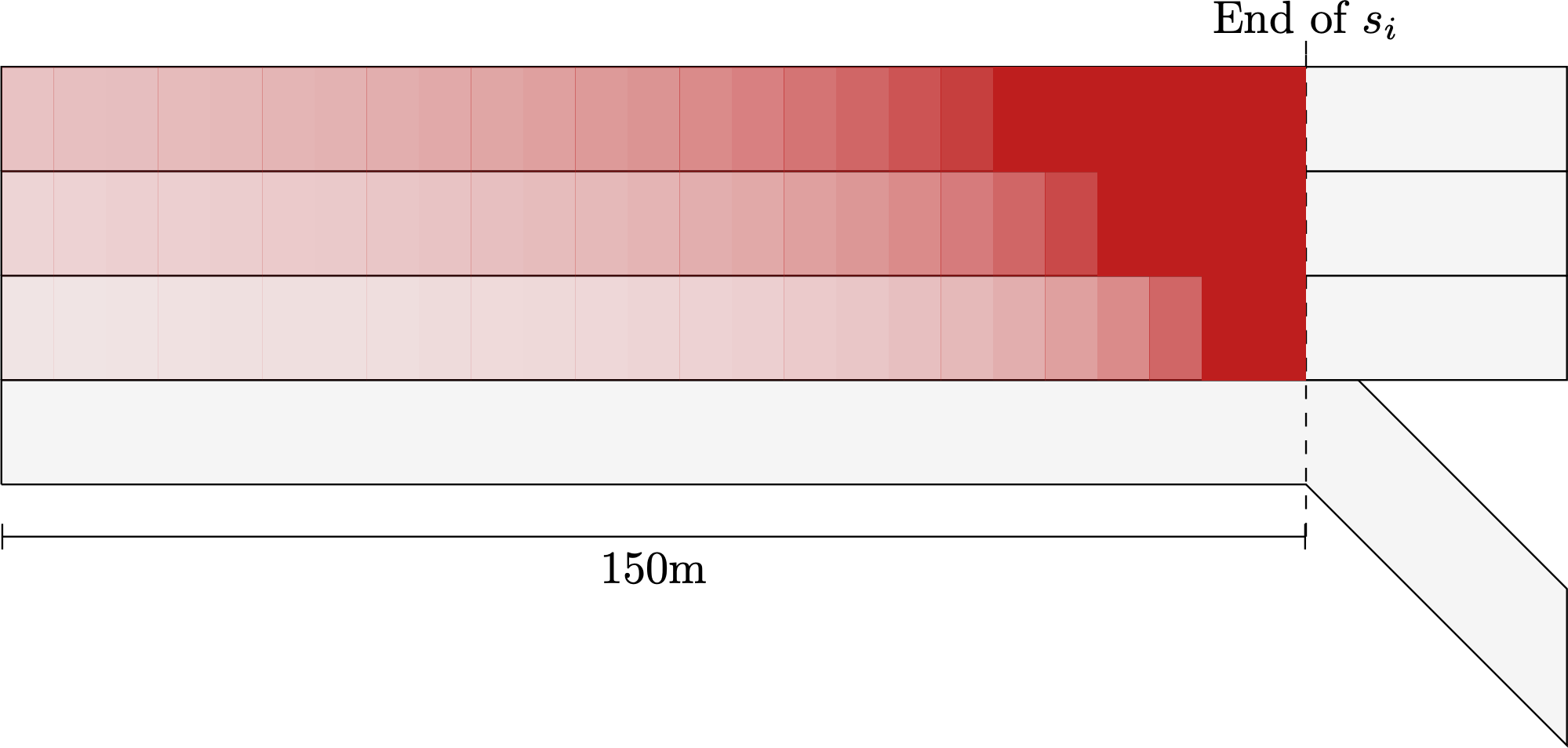}
\end{minipage}
\begin{minipage}{0.4\textwidth}
\includegraphics[width=0.75\textwidth]{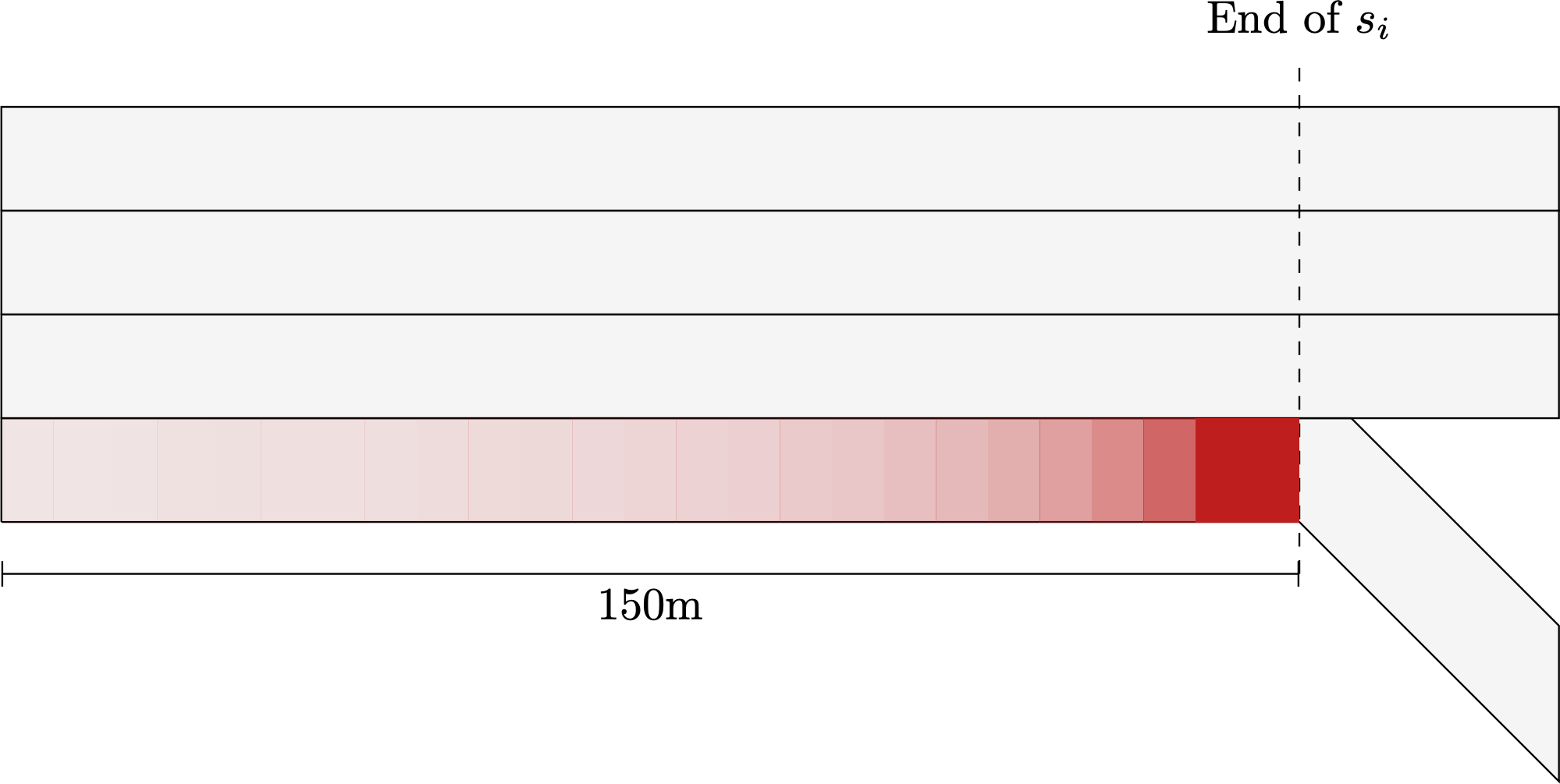}
\end{minipage}
\begin{minipage}{0.1\textwidth}
\includegraphics[height=100px]{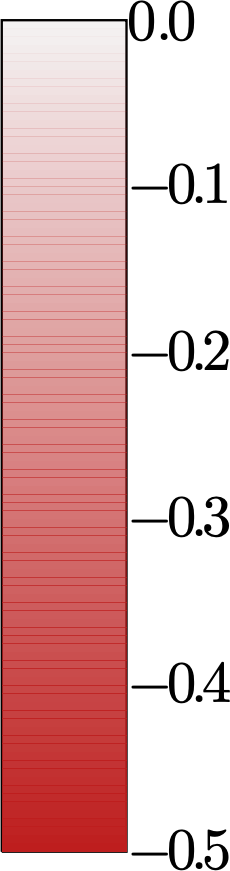}
\end{minipage}
    \caption{The mandatory lane-change penalty $\Rmlc$ accrued by a vehicle in a freeway off-ramp scenario, based on the vehicle's current position and route. For clarity of the illustration, the values are capped below at $-0.5$. Left: The vehicle route exits the freeway at the off-ramp. Right: The vehicle route remains on the freeway.}
    \label{fig:routing-penalty}
\end{figure*}

\subsubsection*{Combined optimization objective}

Putting the four terms together, the reward is then given by
\begin{equation} \label{eq:reward}
\begin{aligned}
    R(s,a,s') = & \Reff(s,a,s') \\
    & + \wcomf \cdot \Rcomf(s,a,s') \\
    & + \wlc \cdot \Rlc(s,a,s') \\
    & + \wmlc \cdot \Rmlc(s,a,s')
\end{aligned}
\end{equation}
where $\wcomf \geq 0,\, \wlc \geq 0 ,\, \wmlc \geq 0$ are the reward weights (hyperparameters).

The overall optimization objective is to find the optimal feasible policy
\[ \pi^* = \operatorname{argmin}_{\pi \in \Pi_{C,w}} J(\pi) \]
where $J(\pi)$ is the expected cumulative discounted reward, as defined in Equation~\ref{eq:J}.

\paragraph*{Normalization} The normalization factors in the reward terms $\Rcomf$ (division by $(a_E+\bE)/r$) and $\Rlc$ (division by $\vstarc$) are not logically necessary. The normalization terms bring the components of the reward to similar scales, making the choice of weights in (\ref{eq:reward}) simpler. In addition, it is standard practice in Deep RL to scale the reward terms to an interval such as $[-1, 1]$ or $[-10, 10]$ in order to aid neural network training.

\paragraph*{Unification of mandatory and discretionary lane-changing}
The composite objective function described in Equation~\ref{eq:reward} is a unified approach to optimizing both discretionary and mandatory lane change decisions. To the best of our knowledge, ours is the first work in the application of Deep RL to autonomous driving where both types of lane-change are treated in a unified manner.

\subsection*{Costs and weights} The hard constraints derived in Section~\ref{sec:formulation} may all be formulated as statewise constraints using cost-functions and weights. For example, the acceleration constraint (\ref{eq:acc-constraint}) may be rewritten in the form $\alpha - \asafec(s) \leq 0$, and the lane-change constraints (\ref{eq:lc-constraint}) in the form
\[
\resizebox{.9\hsize}{!}{%
    $\displaystyle
        I_{\{\ell = \mathrm{left}\}}(a) \left( v_E(s)r + \frac{v_E^2(s)}{2\bE} - \frac{v^2_{\Ll}(s)}{2\bLl} + \epsilon - \gl(s) \right) \leq 0
    $
}
\]
where $I_S(x)$ denotes the indicator function of the set $S$ ($I_S(x) = 1$ if $x \in S$ and 0 otherwise), and $(\alpha,\ell)$ are the two components of the action $a$.

\subsection{Learning an Optimal Policy}\label{sec:Learning}
The controller is trained using the Deep Deterministic Policy Gradient algorithm (DDPG) \cite{ddpg} with a continuous action-space encoding.

\subsubsection*{Action space encoding}
To apply DDPG to our statewise-constrained and mixed discrete-continuous problem, actions are encoded as pairs $(x,y) \in [-3,3]^2$. 

The coordinate $x$ is used to obtain the acceleration action as follows. Combining the safety acceleration constraint (\ref{eq:acc-constraint}) with the upper and lower bounds on acceleration, we obtain the acceleration upper bound $a_\mathrm{ub}(s) = \clip{\asafec(s)}{-\bE}{a_E}$, where the  notation $\clip{a}{b}{c} = \min(\max(a,b), c)$ is used. The coordinate $x$ is affinely projected onto the interval $[-\bE, a_\mathrm{ub}(s)]$ by $x \mapsto -\bE + \frac{x+3}{6}(a_\mathrm{ub}(s)+\bE)$. The resulting acceleration action satisfies all required constraints.

The coordinate $y$ is projected onto the lane-change action space component  by
\begin{equation*}
    y \mapsto \begin{cases}
        \mathrm{left}, & -3 \leq y < -1 \\
        \mathrm{cur}, & -1 \leq y < 1 \\
        \mathrm{right}, & 1 \leq y \leq 3
    \end{cases}
\end{equation*}
Furthermore, if the lane-change action can't be applied (for example, a left lane-change on the left edge of the road), the vehicle stays on its current lane.

\subsubsection*{DDPG} DDPG is a model-free, off-policy actor-critic algorithm. DDPG is an analogue of the DQN algorithm \cite{dqn} for  continuous action spaces.

Similarly to DQN, DDPG iteratively updates an approximation of the state-action value function $Q^*(s,a)$ of an optimal policy. The approximate function is often represented by a neural network with weights $\theta$, and denoted by $Q_\theta(s,a)$.

However, finding $\argmax_a Q_\theta(s,a)$ may itself present a difficult problem when the action space is continuous. Unlike DQN, in DDPG $Q_\theta$ is not used to directly obtain a policy. Instead, a deterministic policy $\pi_\phi$ is trained ($\pi$ is also often represented by a neural network with weights $\phi$) to maximize $Q_\theta(s,\pi_\phi(s))$ from each state.

Because of the roles the two functions play in training, $Q_\theta$ is called the critic and $\pi_\phi$ the actor.

The training loop proceeds as follows. For each environment step, the agent's action is obtained from the deterministic policy $\pi_\phi$ perturbed by an Ornstein-Uhlenbeck noise process for exploration. The experience tuple $(s,a,r,s')$ is stored in a replay buffer. Then, keeping $\phi$ fixed, the network $Q_\theta$ is updated using minibatch stochastic gradient descent to minimize the loss function
\[ L(\theta) = \frac{1}{\left| B\right|} \sum_{(s,a,r,s') \in B} \frac{1}{2} \left( r + \gamma Q_\theta(s',\pi_\phi(s')) - Q_\theta(s,a) \right)^2 \]
where $B$ is a minibatch of experience tuples sampled from the replay buffer. The update target $r + \gamma Q_\theta(s', \pi_\phi(s'))$ is motivated by the Bellman optimality equation as in $Q$-learning, but with the action decided by the actor network instead of being the action $a'$ that maximizes $Q_\theta(s',a')$ (such $a'$ may again be difficult to compute with a continuous action space). 

Then, keeping $\theta$ fixed, the network $\pi_\phi$ is updated using minibatch stochastic gradient ascent to maximize the expected reward
\[ J(\phi) = \frac{1}{\left| B\right|} \sum_{(s,a,r,s') \in B} Q_\theta(s, \pi_\phi(s)) \]

For greater training stability, target networks are introduced in the above procedure. For more details on DDPG, please see the original paper \cite{ddpg}.

As demonstrated in Figure~\ref{frame}, our main framework is a hybrid learning framework. Based on the current traffic environment, the maximal safe target speed is calculated. The RL vehicle will consider the target speed and decide on the longitudinal acceleration value and the lateral decision on whether to switch left, right, or stay in the main lane. Based on feedback from the environment, the RL algorithm will learn the policy.

\afterpage{
\begin{figure}[ht]
\vskip -0.1in
\begin{center}
\centerline{\includegraphics[width=8cm]{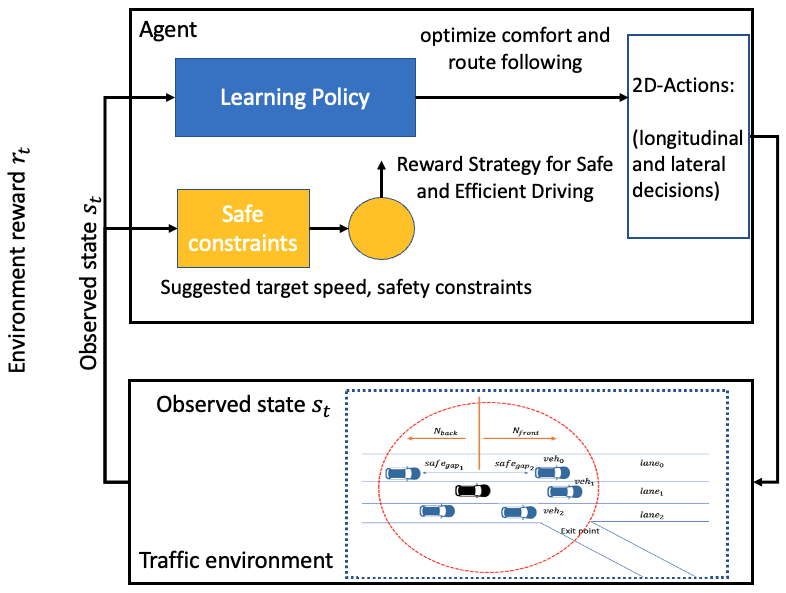}}
\caption{Visualization of the overall framework. Unlike previous work~\cite{cao2020highway}, we enforce safety through action constraints instead of through a safety term in the reward function. The other objectives are optimized subject to satisfying the safety constraints.} 
\label{frame}
\end{center}
\vspace{-0.5cm}
\end{figure}
}

\section{\method~Training setup}
\subsection{Experiment environments}
Training and evaluation are carried out in the SUMO microsimulator using the TraCI Python API \cite{sumo}. The  simulation step-length of 0.1s is used, which is also the controlled vehicle's reaction time $r$.

Experiments are performed in two different environments: a simple loop network, and a network based on a real-world freeway.

\subsubsection{Loop network}
\label{sec:loop-network}
The loop network of Figure~\ref{fig:loop} is used as one of the experiment environments. The effect of the curvature on vehicle speeds is disabled, which makes the network effectively an infinite freeway (without on-ramps and off-ramps). The loop network is used to evaluate efficiency, comfort and discretionary lane-changing in isolation from route-following.

\subsubsection{Interchange of Queen Elizabeth Way (QEW) and Erin Mills Parkway / Southdown Road}
\label{sec:qew}
We created a network based on real road geometry, the Queen Elizabeth Way (QEW) interchange with Erin Mills Parkway (on the North side) and Southdown Road (on the South side) near Toronto, Canada. The network is shown in Figure~\ref{fig:qew-sumo} (in SUMO) and Figure~\ref{fig:qew-google-maps} (in Google Maps). The off-ramp and on-ramp to the QEW are used to test route-following tasks (in addition to optimization of efficiency and comfort, subject to safety constraints, as before).

\begin{figure*}[t]
    \centering
    \hfill
    \begin{minipage}[t]{0.3\textwidth}
        \centering
        \includegraphics[width=\textwidth]{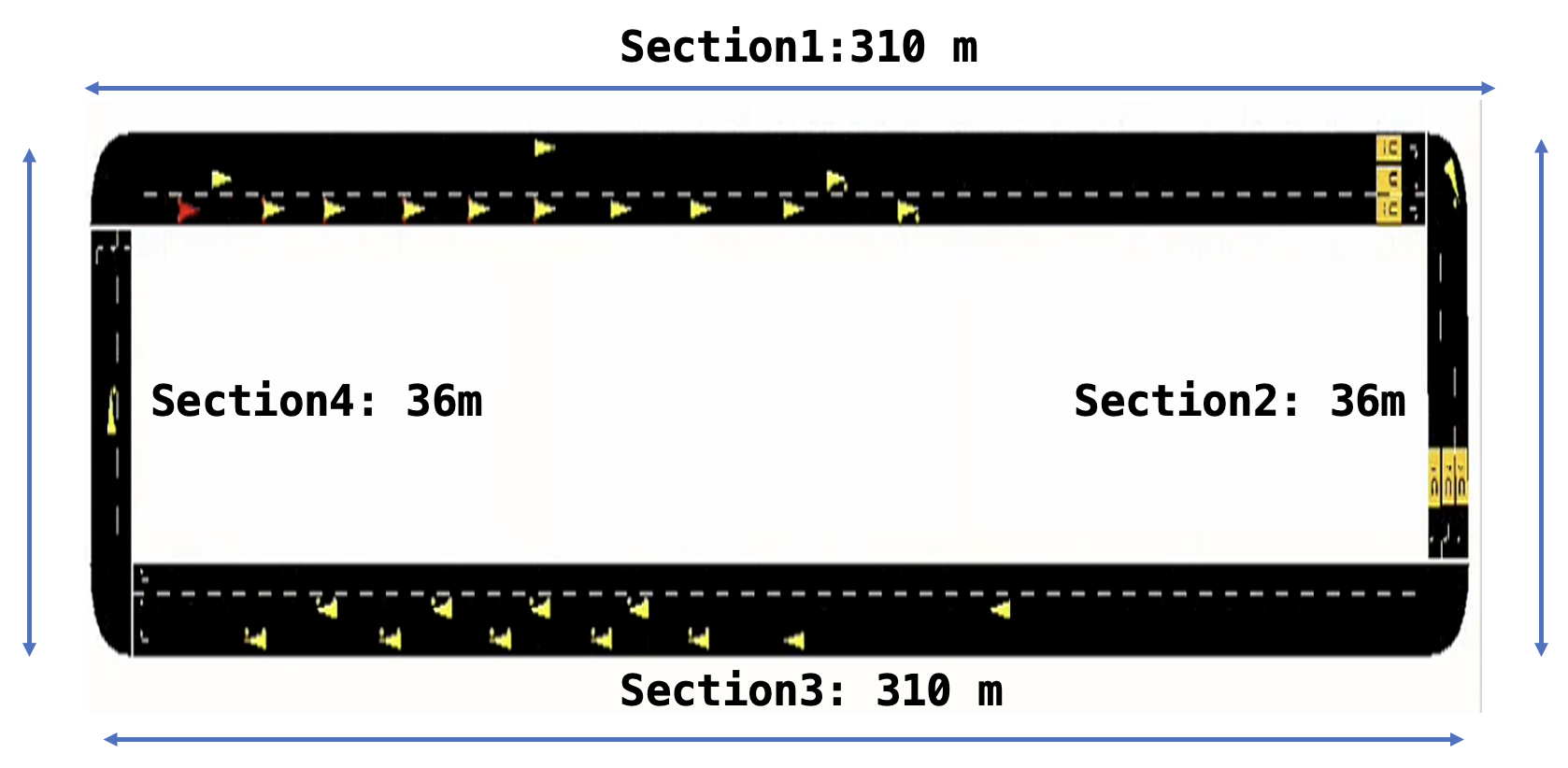}
        \caption{Geometry of the loop network. The effect of curvature on vehicle speed has been disabled.}
        \label{fig:loop}
    \end{minipage}
    \hfill
    \begin{minipage}[t]{0.25\textwidth}
        \centering
        \includegraphics[width=\textwidth, trim={0 0 0 3cm},clip]{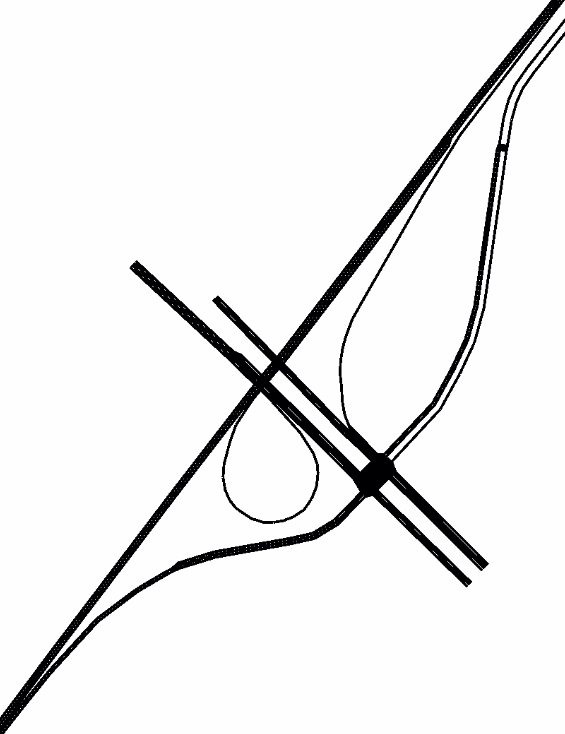}
        \caption{Geometry of the interchange of Queen Elizabeth Way (QEW) and Erin Mills Parkway / Southdown Road (SUMO).}
        \label{fig:qew-sumo}
    \end{minipage}
    \hfill
    \begin{minipage}[t]{0.3\textwidth}
        \centering
        \includegraphics[width=\textwidth]{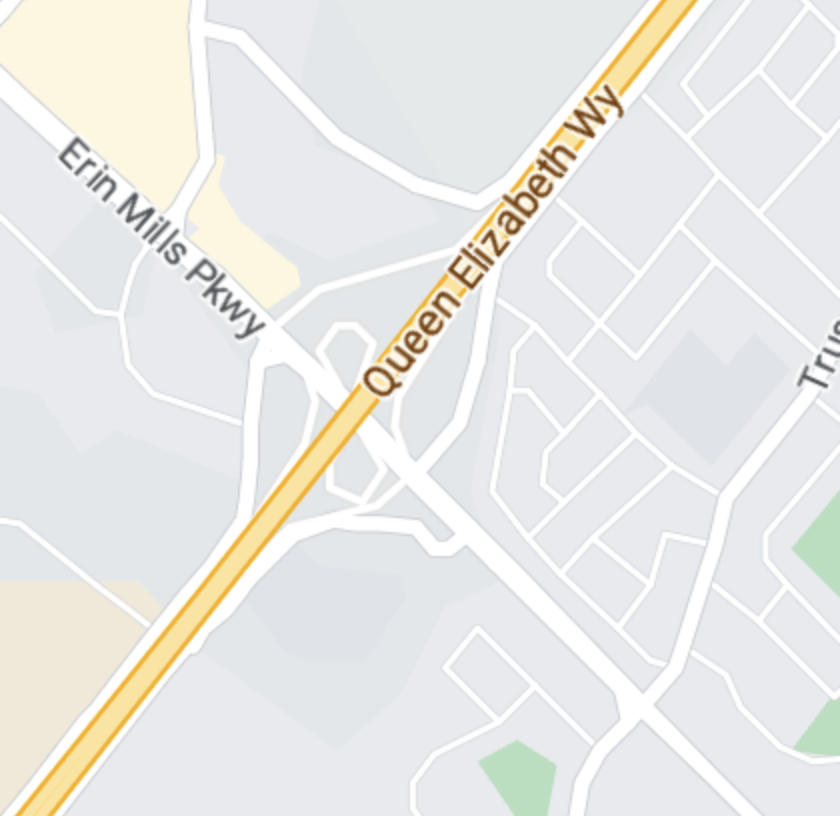}
        \caption{Geometry of the interchange of Queen Elizabeth Way (QEW) and Erin Mills Parkway / Southdown Road (Google Maps)}
        \label{fig:qew-google-maps}
    \end{minipage}
    \hfill
\end{figure*}

\subsection{Modelling uncontrolled vehicles}
The uncontrolled vehicles are modelled using the ``Krauss'' car-following model (SUMO default) and the ``SL2015'' lane-change model cf. \cite{sumo}. This configuration may model either human drivers or other AVs. For brevity, we sometimes refer to the uncontrolled vehicles as human-driver  vehicles (HVs).

\subsection{Training scenarios}
To help generalization to varied or unseen  scenarios during testing, the target speeds (speed limits) and initial speeds of the human-driver vehicles are varied episode-to-episode.

\subsubsection{Loop network}
\label{sec:loop-training}
The loop network is a lightweight network, which allows for faster training and iteration. Each episode is 5,000 time-steps long, to gather sufficient experience for learning.

The controlled vehicle is trained with 25 human-driver vehicles. The human-driver vehicles are injected from a designated starting section with random departure speeds. 
Each episode, the human-driver vehicles' speed limit is resampled uniformly at random from $\{10,20,22,25\}$, and the controlled vehicle's speed limit from $\{16, 22, 28, 36\}$. The set of controlled vehicle speeds is higher than the set of human-driver vehicle speeds to aid  the controlled vehicles in learning to bypass the slower human-driver vehicles, in order to achieve higher efficiency. 

In training, all reward weight hyperparameters are set to one: $\wcomf = \wlc = \wmlc = 1$.

\subsubsection{QEW}
\label{sec:qew-training}
In the QEW network, the simulation horizon is 6,000 steps, to allow sufficiently many environment steps for the RL agent to leave the network.

The uncontrolled vehicles are generated at a fixed inflow rate at the Southern boundary of the freeway mainline. The inflow rates are uniformly  sampled between 500 veh/hour and 1000 veh/hour. The controlled vehicle is generated at the Southern boundary of the freeway mainline after a warm-up period. The speed limits for the controlled vehicle and uncontrolled vehicles are sampled from the same sets as for the loop network.

The controlled vehicle has two possible routes: to stay in the freeway or take the off-ramp. Each episode, we assign the route to the vehicle randomly, each with probability $1/2$.

As in the loop network, in training, all reward weights are set to one $\wcomf = \wlc = \wmlc = 1$.

\subsection{DDPG parameters}
The configuration of the DDPG model is summarized in Table~\ref{ddpgtable}. Five random seeds were sampled, and training was run for 1000 episodes for each seed. The checkpoint that achieved the highest cumulative reward across the 5 seeds  was selected for evaluation.

\begin{table}[t]
\centering
\caption{DDPG and simulation Configuration}
\resizebox{.9\linewidth}{!}{%
\begin{tabular}{lc}
\hline\hline \noalign{\smallskip}
Parameter & Value \\
\hline
Discount factor ($\gamma$) & 0.99 \\
Simulation time step  & 0.1 second \\

Maximum Episode horizon  & \begin{tabular}{c}5,000 (Loop) \\ 6,000 (QEW) \end{tabular} \\
Critic $Q_\theta$ learning rate & 3e-4 \\
Actor $\pi_\phi$ learning rate & 3e-4 \\
Target soft update rate & 0.005 \\
Warm-up  & 1,000 steps \\
Replay buffer size & 1e+6 \\
Actor and critic MLP NN number of layers & 3 \\
Actor and critic NN number of nodes in hidden layer & 256 \\
NN Activation functions & torch.relu \\
NN Last activation function & torch.tanh \\
Batch size for gradient ascent & 64 \\
\hline\hline \noalign{\smallskip}
\end{tabular}
}
\label{ddpgtable}
\end{table}

\subsection{Convergence}
Figure~\ref{fig:reward_convergence} shows the convergence of the efficiency and comfort rewards during training in the QEW network. The solid curves in the figure display the averages of the two rewards during training over 5 different training runs with different random seeds, and the shaded area represents values within one standard deviation across the 5 training runs. The scores initially improve rapidly, and converge to stable values, with slight peaks around episode 400. The training in the loop network had similar dynamics.

In the remaining sections, we shall analyze the trained policy and compare it against several baselines.

\begin{figure}[t]
\vskip -0.1in
\centerline{\includegraphics[width=0.8\linewidth]{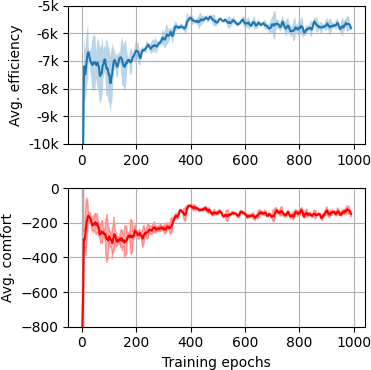}}
\caption{Convergence of the Efficiency (top) and Comfort (bottom) rewards during training.Initial few epochs, the agent adopts a conservative policy, often remaining stationary, resulting in low efficiency scores. Over time, however, it learns to balance safety and efficiency, achieving high levels of driving comfort.  } 
\label{fig:reward_convergence}
\vspace{-0.5cm}
\end{figure}

\section{Long-Term Longitudinal Steady-State, and Platooning}
\label{sec:longitudinal-analysis}
In this section, the long-term longitudinal behaviour of a collection of \method~vehicles is analyzed. We begin by studying the long-term longitudinal behaviour of a collection of vehicles that drive at their maximal safe speed. For a single vehicle with a leader at a constant speed, detailed results are obtained. Then, qualitative implications for collections of \method~vehicles are discussed. Our conclusion is that, speed-limit permitting, in the long-term the vehicles form platoon driving at the same speed as the lead vehicle, with fixed gaps between consecutive vehicles that are determined by the leader speed and the vehicle's reaction time.

\subsection{Single Follower with a Constant-Speed Leader}
Consider a two-vehicle system, where the leader vehicle drives at a constant speed $w$, and the follower always drives at its maximal safe speed. In addition, to simplify the analysis, we assume that the speed limit is infinite, there is no acceleration limit, and that there is a single lane. We discuss relaxations of these assumptions at the end of this section.

The motion of the follower can be fully described by the two-dimensional discrete-time dynamical system 
    \[(g_t, v_t)\]
where $g_t$ is the gap between the follower and the leader at time-step $t$, and $v_t$ is the velocity of the follower at time-step $t$, with update rules
\begin{equation*}
    \resizebox{.9\hsize}{!}{%
    $\displaystyle
    \begin{split}
    g_{t+1} &= g_t - \dt(v_t - w) \\
    v_{t+1} &= -\frac{r\bE}{2} + \sqrt{\left(\frac{r\bE}{2}\right)^2 -2\bE \left( \frac{r v_t}{2} - \frac{w^2}{2\bLc} - g_t + \epsilon \right)}
    \end{split}
    $}
\end{equation*}
where the velocity update is obtained from Equation~\ref{eq:safe-speed} for $\vsafec(t+1)$.

To begin the analysis, it is helpful to notice that Equation~\ref{eq:safe-speed} may be rearranged to
\begin{equation}
\label{eq:safe-speed-rearranged}
\resizebox{.9\hsize}{!}{%
$\vsafec(t+1) = -\frac{r\bE}{2} + 
\sqrt{\left(v_{\Lc}(t) + \frac{r\bE}{2}\right)^2  + 2\bE\, \delta g(t) - \bE r\, \delta v(t)}$}
\end{equation}
where
\begin{equation}
    \label{eq:deltas}
    \begin{split}
    \delta g(t) &= \gc(t) - v_{\Lc}(t)r - \frac{\bLc - \bE}{2\bLc\bE} v_{\Lc}(t)^2 - \epsilon \\
    \delta v(t) &= v_E(t) - v_{\Lc}(t)
    \end{split}
\end{equation}
This rearrangement makes the steady-state equilibrium target more transparent.
\begin{thm}
The state $\displaystyle (g^*, v^*) = \left(w\dt + \frac{\bLc - \bE}{2\bLc\bE}w^2 + \epsilon,\ w\right)$ is a system equilibrium.
\end{thm}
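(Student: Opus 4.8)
The plan is to verify directly that the proposed state is a fixed point of the two update rules, i.e.\ that setting $(g_t, v_t) = (g^*, v^*)$ forces $(g_{t+1}, v_{t+1}) = (g^*, v^*)$. The first coordinate is immediate: since $v^* = w$, the gap update $g_{t+1} = g_t - \dt(v_t - w)$ becomes $g_{t+1} = g^* - \dt(w - w) = g^*$, with no computation required.

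The only substantive step is the velocity update, and here I would work with the rearranged safe-speed formula, Equation~\ref{eq:safe-speed-rearranged}, together with the definitions of $\delta g(t)$ and $\delta v(t)$ in Equation~\ref{eq:deltas}. Evaluating at $v_E(t) = v_{\Lc}(t) = w$ gives $\delta v(t) = 0$ at once. Substituting $\gc(t) = g^*$ into the expression for $\delta g(t)$, the three summands $w\dt$, $\tfrac{\bLc - \bE}{2\bLc\bE} w^2$ and $\epsilon$ of $g^*$ cancel exactly against the corresponding terms subtracted in $\delta g(t)$, so $\delta g(t) = 0$ as well. Consequently the radicand in Equation~\ref{eq:safe-speed-rearranged} collapses to $\bigl(w + \tfrac{\dt\bE}{2}\bigr)^2$, and hence $v_{t+1} = -\tfrac{\dt\bE}{2} + \bigl(w + \tfrac{\dt\bE}{2}\bigr) = w = v^*$.

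The one point needing care is the choice of branch of the square root: one must confirm that the principal root of $\bigl(w + \tfrac{\dt\bE}{2}\bigr)^2$ is $w + \tfrac{\dt\bE}{2}$ and not its negative, which holds because $w \ge 0$ and $\dt, \bE > 0$ make the quantity nonnegative; this is the only place the sign conventions on speeds and parameters enter. I would close with the remark that this computation is precisely what motivated introducing the rearranged form~\ref{eq:safe-speed-rearranged}: it recasts the equilibrium condition as the simultaneous vanishing of $\delta g$ and $\delta v$, which is transparent once $v_E = v_{\Lc} = w$ and $g = g^*$.
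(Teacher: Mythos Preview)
Your proposal is correct and follows essentially the same route as the paper: both verify the fixed point directly using the rearranged safe-speed formula~\eqref{eq:safe-speed-rearranged}, observing that at $(g^*,v^*)$ one has $\delta g = \delta v = 0$ so that the radicand collapses to $\bigl(w+\tfrac{r\bE}{2}\bigr)^2$. Your explicit remark on the nonnegative branch of the square root is a small extra point the paper leaves implicit, but otherwise the arguments coincide.
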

\begin{proof} Indeed, assuming that $(g_t, v_t) = (g^*, v^*)$, we compute that
\begin{equation*}
    \resizebox{.9\hsize}{!}{%
    $\displaystyle
    \begin{split}
        g_{t+1} &= g_t - \dt(0) = g_t \\
        v_{t+1} &= -\frac{\bE\dt}{2} + \sqrt{\left( w + \frac{\bE\dt}{2}\right)^2 + 2\bE(0) - \bE\dt(0)}= w = v_t
    \end{split}$
    }
\end{equation*}
So that $(g_{t+1}, v_{t+1}) = (g_t, v_t) = (g^*, v^*)$.
\end{proof}
\begin{thm}
For $w > 0$, the equilibrium point $(g^*, v^*)$ is asymptotically stable. That is, there exists an open set $U \subseteq \mathbb{R}^2$ containing $(g^*, v^*)$ such that for all initial points $(g_0, v_0)$ in $U$, 
    \[ \lim_{t\rightarrow \infty} (g_t, v_t) = (g^*, v^*). \]
\end{thm}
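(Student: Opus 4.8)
The plan is to apply the discrete-time analogue of Lyapunov's indirect method: write the dynamics as a fixed-point iteration $(g_{t+1},v_{t+1}) = \Phi(g_t,v_t)$, check that $\Phi$ is $C^1$ near $(g^*,v^*)$, compute the Jacobian $D\Phi(g^*,v^*)$, and show its spectral radius is strictly less than $1$; the standard linearized-stability theorem then supplies an open neighbourhood $U$ of $(g^*,v^*)$ on which the iteration converges to $(g^*,v^*)$, which is exactly the assertion. Here $\Phi(g,v) = \bigl(g - r(v-w),\, F(g,v)\bigr)$, with $F$ the velocity update, i.e. the right-hand side of \ref{eq:safe-speed-rearranged} specialized to $v_{\Lc}\equiv w$. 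Smoothness near the equilibrium follows once one observes, using the rearranged form \ref{eq:safe-speed-rearranged}--\ref{eq:deltas}, that the radicand of $F$ equals $\bigl(w+\tfrac{r\bE}{2}\bigr)^2 + 2\bE\,\delta g - \bE r\,\delta v$, and that $\delta g = \delta v = 0$ at $(g^*,v^*)$ (this vanishing is precisely why $(g^*,v^*)$ is an equilibrium). Since $w>0$, the radicand there equals $\bigl(w+\tfrac{r\bE}{2}\bigr)^2 > 0$, so $F$, and hence $\Phi$, is $C^\infty$ on a neighbourhood of the equilibrium.

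Next I would differentiate. Put $S^* = w + \tfrac{r\bE}{2}$ and $a = \bE/S^* = 2\bE/(2w + r\bE) > 0$. Differentiating $F$ through the square root at $(g^*,v^*)$ gives $\partial_g F = \bE/S^* = a$ and $\partial_v F = -r\bE/(2S^*) = -ra/2$, so
\[ D\Phi(g^*,v^*) = \begin{pmatrix} 1 & -r \\ a & -\tfrac{ra}{2} \end{pmatrix}, \qquad \operatorname{tr} = 1 - p, \quad \det = p, \]
where $p := \tfrac{ra}{2} = \dfrac{r\bE}{2w + r\bE}$. Because $w>0$, $\bE>0$, $r>0$, we have $p \in (0,1)$, and the characteristic polynomial of $D\Phi(g^*,v^*)$ is $\chi(\lambda) = \lambda^2 - (1-p)\lambda + p$.

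It then remains to show both roots of $\chi$ lie in the open unit disc, which is a one-line check via the Schur--Cohn (Jury) criterion for a real quadratic $\lambda^2 + b_1\lambda + b_0$: its roots satisfy $|\lambda|<1$ iff $|b_0|<1$ and $|b_1| < 1 + b_0$; here $b_0 = p \in (0,1)$ and $b_1 = -(1-p)$, with $|1-p| = 1-p < 1+p$ since $p>0$. (Elementarily: if $(1-p)^2 - 4p < 0$ the roots are complex conjugates of modulus $\sqrt{\det} = \sqrt{p}<1$; if $(1-p)^2 - 4p \ge 0$ the roots are real, and since $\chi(0) = p > 0$, $\chi(1) = 2p > 0$, while the vertex $\tfrac{1-p}{2}$ lies in $(0,1)$, both roots are forced into $(0,1)$.) Hence the spectral radius of $D\Phi(g^*,v^*)$ is strictly below $1$, and the linearized-stability theorem yields the open set $U$ with $(g_t,v_t) \to (g^*,v^*)$ for all $(g_0,v_0) \in U$.

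The computation is essentially routine, so there is no deep obstacle; the only two places that genuinely use the hypotheses are (i) positivity of the radicand at the equilibrium, needed for differentiability of $\Phi$ there, and (ii) strictness of the spectral-radius bound. Point (ii) is where $w>0$ is essential: at $w = 0$ one gets $p = 1$, $\operatorname{tr} = 0$, $\det = 1$, and eigenvalues $\pm i$ on the unit circle, so linearization is inconclusive — which is exactly why the statement restricts to $w>0$. I would therefore flag (ii) as the ``obstacle'' only in the bookkeeping sense of pinning down where the hypothesis is used.
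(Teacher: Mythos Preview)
Your proof is correct and follows the same strategy as the paper: linearize the map $\Phi$ at the equilibrium and show the Jacobian's spectral radius is below $1$, invoking the standard discrete-time linearized-stability theorem. The only difference is in how the eigenvalue bound is verified. The paper computes the eigenvalues explicitly as
\[
\lambda_\pm = \frac{w \pm \sqrt{w^2 - 2w\bE r - \bE^2 r^2}}{2w + \bE r}
\]
and then does a real/complex case split (deferred to an appendix) to check $|\lambda_\pm|<1$. Your route via $\operatorname{tr} = 1-p$, $\det = p$ with $p = r\bE/(2w+r\bE)\in(0,1)$, followed by the Schur--Cohn test (or the quick $\chi(0),\chi(1)>0$ / $|\lambda|^2=\det$ argument), is cleaner: it avoids the explicit radical and reduces the whole verification to the single observation $0<p<1$. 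Both arguments identify the same borderline case $w=0$, where $p=1$ and the eigenvalues become $\pm i$.
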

\begin{proof}To demonstrate asymptotic stability, it is sufficient to look at the linearization of the system about the equilibrium point $(g^*, v^*)$ and show that both eigenvalues of the Jacobian matrix have norm $< 1$ \cite[Th.~4.7]{discdynsys}.

The Jacobian of the update functions at the point $(g,v)$ is
\begin{align*}
    \begin{pmatrix}
        1 & -r \\
        \frac{2\bE}{2\sqrt{A(g_t,v_t)}} &  \frac{-\bE r}{2\sqrt{A(g_t,v_t)}}
    \end{pmatrix}
\end{align*}
where $A(g_t, v_t) = \left( w + \frac{\bE\dt}{2}\right)^2 + 2\bE\,\delta g(t) - \bE\dt\,\delta v(t)$.

Therefore, at the equilibrium point $(g^*, v^*)$, we have the Jacobian
\begin{align*}
    \begin{pmatrix}
        1 & -r \\
        \frac{2\bE}{2w + \bE r} &  \frac{-\bE r}{2w + \bE r}
    \end{pmatrix}
\end{align*}
The eigenvalues of the Jacobian at $(g^*, v^*)$ are 
\[
    \lambda_{\pm} = \frac{w \pm \sqrt{w^2 - 2w\bE\dt - \bE^2\dt^2}}{2w + \bE r}
\]
It is elementary to verify that $\left| \lambda_\pm \right| < 1$ whenever $w > 0$ (please see Appendix \ref{app:jac-eigen-detail}). 
\end{proof}
\begin{remark}
For the $w = 0$ case, we have complex eigenvalues $\lambda_\pm = \pm \sqrt{-1}$, both of which have norm 1. Therefore, the central eigenspace spans the entire space and we cannot conclude whether the equilibrium point is stable or not from general theory \cite[p.~124]{dynsys}. Empirically, however, we find that the $w = 0$ case behaves similarly to the $w > 0$ case.
\end{remark}

\subsubsection*{Removal of simplifying assumptions} We  briefly discuss how removing the simplifying assumptions made at the start affects the analysis. With a bounded maximal acceleration, the follower may need an initial speed-up period in order to reach a high enough speed to catch up with the leader, after which the analysis reduces to the previous case. With a speed-limit, the analysis proceeds as before as long as $w$ is lower than the speed limit. With $w$ equal to the speed limit, the follower will not be able to catch up to the leader.

\subsection{Qualitative implications for \method~platoon formation}
In longitudinal motion, \method~is rewarded for taking the action that brings its velocity as close to $\vsafec(t+1)$ as possible, subject to minimizing the discomfort penalty integrated over its trajectory (and acceleration/speed-limit constraints).

Thus, although this is no longer subject to proof, our expectation is that a well-trained \method~model following a constant-speed leader will have similar long-term equilibrium properties to the system analyzed in the previous section, that is, will drive at speed $w$ with gap $w\dt + \frac{\bLc - \bE}{2\bLc \bE} w^2 + \epsilon$, and that this equilibrium will be stable with respect to small perturbations in the initial conditions.

For a platoon of \method~vehicles following a fixed-speed leader, we proceed to argue inductively. As vehicle $n$ approaches its equilibrium with its respective leader, the speed of vehicle $n$ will continue to converge to the common platoon speed $w$. Therefore, vehicle $n$ will start to act as a constant-speed (up to a small error) leader for vehicle $n+1$. We may then apply our analysis to the vehicle $(n, n+1)$ system, concluding as before that in steady-state vehicle $n+1$ will have gap and velocity described by the previously obtained formulas. 

Thus, we expect that a well-trained \method~system will form a platoon with a common velocity $w$ and inter-vehicle gaps given by
\[w\dt + \frac{\bLc - \bE}{2\bLc \bE} w^2 + \epsilon\] 
This expectation is confirmed by empirical tests. An example of a four-vehicle platoon (constant-speed leader and three \method~followers) on a single-lane loop  reaching its steady-state is displayed in Figure~\ref{fig:platoon-formation}.

A closer analysis of the string-stability properties of the resulting platoon will be the subject of future work.

Finally, we note that if in addition the vehicle velocities are constrained by a speed limit, it may not be possible for a follower vehicle to catch up to its leader (if the leader drives at the speed limit). Therefore, with a speed-limit a collection of \method~vehicles is expected to form into several platoons, grouped by whether or not the follower is able to catch the leader under the speed-limit constraints.

\begin{figure*}[t]
\centering
{\includegraphics[width=0.8\textwidth]{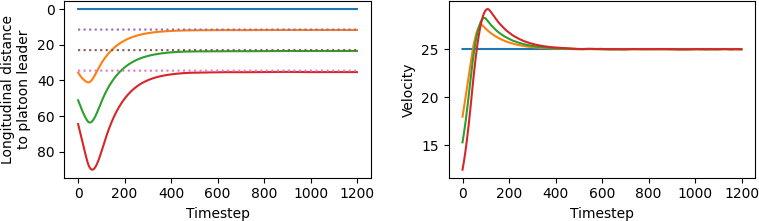}}
\caption{Convergence to steady-state of a platoon consisting of a leader (blue) driving with a constant speed of 25 m/s and three \method~follower vehicles (orange, green, red). The minimal gap setting is $\epsilon=4$ m. Each vehicle is 5 m long. Left: Relative timespace diagram (longitudinal distance of each vehicle from the leader as a function of time). Right: Vehicle velocities. The steady-state distances match our theoretically predicted values (dotted lines).} 
\label{fig:platoon-formation}
\end{figure*}

\begin{table*}[t]
    \centering
    \caption{Loop network statistics comparison. 5000 simulation steps with 30 random seeds}
    \resizebox{0.95\textwidth}{!}{%
    \begin{tabular}{|l|ccc|ccc|ccc|}\hline
         \multirow{2}{*}{Method} & \multicolumn{3}{c|}{Normal driving} & \multicolumn{3}{c|}{Heavy traffic} & \multicolumn{3}{c|}{Emergency braking} \\%
         \cline{2-10}
         & Speed (m/s) & Jerk (m/s$^3$) & Crash rate & Speed (m/s) & Jerk (m/s$^3$) & Crash rate & Speed (m/s) & Jerk (m/s$^3$) & Crash rate\\
         \hline
        IDM+MOBIL & 27.48 & 0.87 & 0\%
                  & 25.63 & 0.04 & 0\%
                  & 26.76 & 0.99 & 0\% \\
        Gipps+Greedy  &32.51 &1.39 & 0\%
                      &26.59 &0.09  & 0\%
                      &30.51 &1.47  & 0\%\\
         PPO-based RL\cite{ye2020automated}  &28.84 &1.31 &3\%
         &25.88 &0.03   &7\% 
         &28.11 &1.20  &21\% \\
        \method  &32.45 &\textbf{0.86}   & \textbf{0\%}
                 &26.32&\textbf{0.02} & \textbf{0\%}
                 &\textbf{31.73} &\textbf{0.89}   &  \textbf{0\%}\\
        \hline
    \end{tabular}
    }
    \label{table:loop-results}
\end{table*}

\section{Baselines}
In the rest of this work, \method~is evaluated in several experiments, and compared with the following baselines:
\begin{itemize}[wide]
    \item \textbf{IDM with MOBIL:} The ``Intelligent Driver Model'' (IDM) \cite{idm} car-following with ``Minimizing Overall Breaking Induced by Lane-changes'' (MOBIL) \cite{treiber2016mobil} lane-changing is a common model of realistic driving (which can be used to model either human drivers or AVs).

    \item \textbf{PPO-based lane-change maneuver (Ye et al. ~\cite{ye2020automated}) :}  a good RL baseline for the discretionary lane-change comparison. 

   \item \textbf{Highway exiting planner (Cao et 
    al. ~\cite{cao2020highway}): }
    The deep reinforcement learning-enhanced highway exiting planner~\cite{cao2020highway} is another baseline. The RL framework is based on Monte Carlo Tree Search (MCTS) with safety enhancements based on traffic rule constraints. This baseline is used to evaluate the performance of our agent's route-following. This is a good baseline for mandatory lane change comparison.

   \item \textbf{Gipps with greedy lane selection:}
   In the case when the ego vehicle's speed is constrained by a leader vehicle (as opposed to the free-driving regime in which the speed is constrained by the road's speed limit), the Gipps car-following model \cite{gipps} is also based on the Vienna safety convention, similar to our previous longitudinal car-following work SECRM \cite{secrm}. In the terminology of this paper, the Gipps model always drives at the maximal safe velocity $\vsafec$ defined in Equation \ref{eq:safe-speed}, disregarding comfort. In \cite{secrm}, we noticed that SECRM converges to Gipps if the comfort weight is set to 0.  (It should be noted that in the case when the ego vehicle's speed is constrained by the speed limit, Gipps switches to a  model fit to experimental human-driver data, and is  suboptimal compared to \cite{secrm}.) 
   
  We extend Gipps with a lane-changing model that switches to one of the adjacent lanes whenever the difference in target speeds with the current lane is larger than a threshold (typically, the threshold is 3 m/s), subject to similar lane-change safety constraints as \method. The resulting model is called Gipps with greedy lane selection (Gipps+Greedy). We conjecture that \method~should converge to Gipps with greedy lane selection in the case that the comfort weight $\wcomf$ is set to 0. Gipps with greedy lane selection is included as an extreme baseline that optimizes efficiency while disregarding comfort.

\end{itemize}
For the mandatory lane change decisions for the baselines, we use the rule-based method from SUMO~\cite{erdmann2015sumo}.

\section{Experiments on Discretionary Lane-Changing  in the Loop Network}
In this section, the controller  trained in the loop network as described in Section~\ref{sec:loop-training} is evaluated. Three test scenarios are created for the loop network,:
\begin{itemize}[wide]
    \item \textbf{Normal driving.} The controller is tested with 25 human-driver  vehicles. The human-driver vehicles have the same type of driving model as in training. The speed-limit for the human-driver vehicles is chosen to be 17 m/s, and the speed-limit for the controlled vehicle is 34 m/s. We are careful to select speed-limit values that cannot be sampled during training, so that the controller must generalize from train to test.

    \item \textbf{Congested traffic.} The heavy traffic scenario is similar to the previous regular driving scenario, but the number of human-driver vehicles is doubled to 50 to evaluate how well the controlled vehicle can handle congested scenarios. 

    \item \textbf{Emergency braking.} To measure the controlled vehicle's safety performance, in the emergency braking scenario the human-driver vehicles will undergo emergency braking in a specific section with maximum deceleration until reaching minimum speed of 3 m/s. We applied emergency braking in different sections of the loop network in the testing phase in order to evaluate the generalization of the controller. 
\end{itemize}

\subsection*{Results analysis}
Table~\ref{table:loop-results} compares the average speed, jerk and crash rate of \method~with three baselines in the three test scenarios described above.

Overall, \method~is comparable with the best performers in both efficiency (average speed) and comfort (average jerk), and has 0\% crash rate. However, we note the other best performer in either speed or jerk scores lower on the other metric, making \method~the best overall controller. More specifically, \method~and Gipps+Greedy reach the best average speed, but \method~has significantly lower jerk. IDM and \method~have comparable jerk, but \method~has superior efficiency (higher average speed).

The results match intuition: we expect \method~to converge to Gipps+Greedy in the limit as $\wcomf \rightarrow 0$. \method~trades in some potential loss of efficiency in exchange for highly improved comfort. On the other hand, IDM is smoother, but is either unsafe or inefficient because of its fixed (non-dynamic) target gap (please refer to our previous work \cite{secrm} for further comparison with the IDM non-dynamic gap).


In the congested traffic scenario, we observe fewest differences between the compared methods, as the vehicles are packed closely together, constraining movement and making overtaking difficult.

During testing, the \method~vehicle learned to bypass the slower vehicles in both normal driving and emergency stop scenarios, improving efficiency. The method is successful in learning effective discretionary lane changes.

We highlight that \method~achieves 0\% crash rate in all three scenarios. On the other hand, the compared RL baseline (PPO-based RL) has a 3--7\% crash rate in regular and congested scenarios and the very high 21\% crash rate in the emergency braking scenario. These results support our motivating claim that while RL is a promising approach for reaching autonomous driving, without stricter action constraints (such as ones put in place for \method), ensuring sufficient safety for deployment may be difficult.

\begin{table*}[t]
    \centering
    \caption{Bypassing scenario results with equal probability of route staying on freeway and route taking off-ramp.}
    \resizebox{\linewidth}{!}{%
        \begin{tabular}{@{} l *{13}{c} @{}}
        \toprule
        \multirow{2}{*}{Method} & \multicolumn{4}{c}{$H=5$} & \multicolumn{4}{c}{$H=10$} & \multicolumn{4}{c}{$H=20$} \\
        \cmidrule(lr){2-5} \cmidrule(lr){6-9} \cmidrule(lr){10-13}
        & Speed (m/s) & Jerk (m/s$^3$) & Crash Rate & Route Miss Rate & Speed (m/s) & Jerk (m/s$^3$) & Crash Rate & Route Miss Rate & Speed (m/s) & Jerk (m/s$^3$) & Crash Rate & Route Miss Rate \\
        \midrule
        IDM+MOBIL & 18.67 & 1.23 & 13\% & 31\% & 19.21 & 1.39 & 12\% & 24\% & 20.01 & 1.03 & 3\% & 4\% \\
        Gipps+Greedy & 20.34 & 1.86 & 14\% & 27\% & 23.98 & 2.01 & 15\% & 20\% & 23.47 & 1.32 & 5\% & 0\% \\
        PPO-based RL & 19.31 & 1.31 & 23\% & 33\% & 20.12 & 1.38 & 24\% & 27\% & 21.31 & 1.23 & 16\% & 19\% \\
        Mandatory Lane Change RL  & 18.21 & 1.71 & 18\% & 17\% & 19.12 & 1.58 & 17\% & 13\% & 19.98 & 1.31 & 5\% & 1\% \\
        \method & \textbf{20.38} & \textbf{1.08} & \textbf{0\%} & \textbf{15\%} & 23.08 & \textbf{1.13} &\textbf{ 0\%} & \textbf{10\%} & 23.42 & \textbf{1.09} & \textbf{0\%} & \textbf{0\%} \\
        \bottomrule
        \end{tabular}%
    }
    \label{table:qew-bypassing}
\end{table*}

\begin{table*}[t]
    \hfill
    \begin{minipage}{0.35\textwidth}
        \centering
        \caption{Emergency braking scenario results} 
        \resizebox{\textwidth}{!}{%
        \begin{tabular}{lcc}
            \toprule
            Method & Jerk (m/s$^3$) & Crash Rate \\
            \midrule
            IDM+MOBIL & 0.95 & 6\% \\
            Gipps+Greedy & 2.43 & 12\%  \\
            PPO-based RL & 1.08 & 24\%  \\
            Mandatory Lane Change RL & 1.91 & 17\%  \\
            \method & \textbf{0.94} & \textbf{0\%} \\
            \bottomrule
        \end{tabular}
        }
        \label{table:qew-emergency}
    \end{minipage}
    \hfill
    \begin{minipage}{0.6\textwidth}
        \centering
        \caption{Merging scenario results}
        \resizebox{\textwidth}{!}{%
        \begin{tabular}{lcccc}
        \toprule
        Method & Speed (m/s) & Jerk (m/s$^3$) & Crash Rate  & Merge Miss Rate \\ \midrule
        IDM+MOBIL & 18.67 & 1.23 & 13\% & 31\% \\
        Gipps+Greedy & 20.34 & 1.86 & 14\% & 27\% \\
        PPO-based RL & 19.31 & 1.31 & 23\% & 33\% \\
        Mandatory Lane Change RL  & 18.21 & 1.71 & 11\% & 17\% \\
        \method & \textbf{22.53} & \textbf{1.21} &
        \textbf{0\%} & \textbf{13\%} \\ \bottomrule
        \end{tabular}
        \label{table:qew-merging}
        }
    \end{minipage}
    \hfill
\end{table*}

\begin{table}[t]
    \centering
    \caption{Bypassing HW 5: Dependence on Reaction Time} 
    \begin{tabular}{lccc}
        \toprule
        Method & Speed (m/s) & Jerk (m/s$^3$) & Crash Rate \\
        \midrule
       $r_E=r_F=0.1$ & 21.29 & 1.09 & 0\% \\
        $r_E=r_F=1$ & 19.21 & 1.08 & 0\% \\
       $r_E=0.1,\ r_F=1$ & 20.38 & 1.08 & 0\%  \\
        \bottomrule
    \end{tabular}
    \label{table:differentassumption}
\end{table}

\section{Experiments on Bypassing, Route-Following, and Merging in the QEW}
In this section, the controller trained in the QEW network as described in Section \ref{sec:qew-training} is evaluated.
We aim to investigate: (1) whether our agent can conduct safe, efficient and comfortable lane-changes; (2) whether our agent can follow the provided route, i.e., stay on main-lane or exit; (3) whether our agent guarantees safety including in emergency braking; (4) whether our agent can generalize to unseen scenario configurations.

To ensure that the test scenario's configuration is different from training to avoid over-fitting, we use 15 m/s for human-driver vehicles' speed limit and 25 m/s for the controlled vehicle's speed limit.

Three test scenarios are created for the QEW network:
\begin{itemize}[wide]
\item \textbf{Bypassing.}
The motivation for constructing the bypassing test environment is to evaluate the ability of the controlled  vehicle to conduct lane changes to bypass slower vehicles. The construction of the environment is based on the following rule as illustrated in Figure~\ref{zigzag}. The head-to-head distance between adjacent  human-driver vehicles is maintained at $H$. Lane-changing for human-driver vehicles is disabled. We set the speed of all human-driver vehicles as $v_{\mathrm{human}}$  m/s which is slower than the controlled vehicle's target speed of  $v_{\mathrm{AV}}$ 
 m/s. The controlled vehicle (in black) is initialized behind of the human-driver vehicles' zig-zag fleet. The controlled  vehicle's route is assigned to be either to stay on the freeway or exit using the off-ramp with equal probability of $1/2$.

\begin{figure}[ht]
\vskip -0.1in
\begin{center}
\centerline{\includegraphics[width=0.9\linewidth]{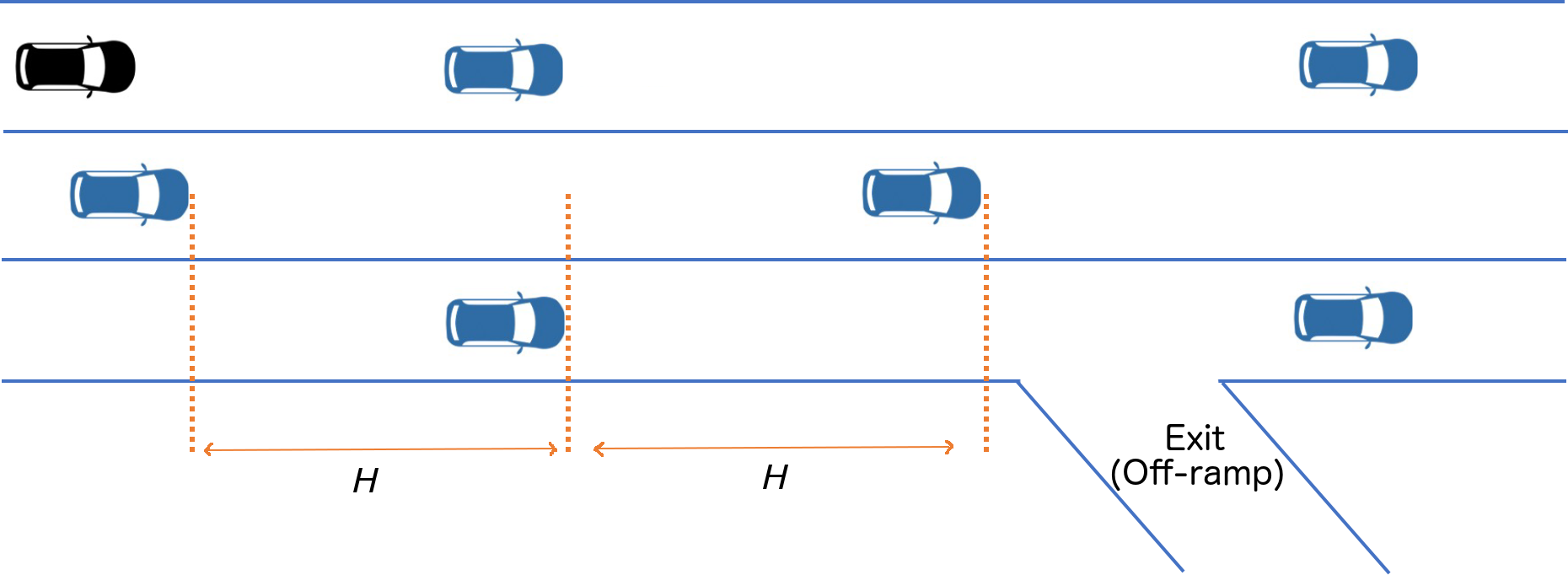}}
\caption{Diagram of the bypassing scenario in the QEW network.} 
\label{zigzag}
\end{center}
\vspace{-0.5cm}
\end{figure}

\item \textbf{Emergency braking.}
As shown in Figure~\ref{brakeqew}, to test the ability of \method~to handle emergency braking, another scenario is constructed. The headway distance between each human-driver vehicle $H$ is set to be  too small for the controlled vehicle to bypass, forcing the controlled vehicle to stay behind. Maximum deceleration of $-3$ m/s$^{2}$ is applied to each human-driver vehicle in the shaded area. In the emergency braking scenario in the loop network, the braking leader vehicle occupied only one of the lanes, which permitted the \method-controlled vehicle to lane-change to avoid a crash. In the QEW test, \method~cannot lane-change to avoid a crash and must brake.
\begin{figure}[ht]
\vskip -0.1in
\begin{center}
\centerline{\includegraphics[width=9cm]{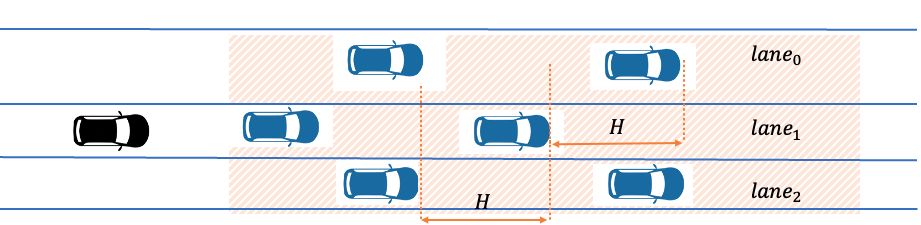}}
\caption{Diagram of the emergency braking scenario in the QEW freeway.} 
\label{brakeqew}
\end{center}
\vspace{-0.5cm}
\end{figure}

\item \textbf{Merging.}
Another scenario is merging from an on-ramp. The relevant portion of the network is illustrated in Figure~\ref{mergeqew}. The human-driver vehicles will keep on generating on the mainline with $H=5$ to make it challenging. The controlled vehicle is generated in the merge lane, and the controlled vehicle needs to merge into the mainline before the lane drop. The merge will be considered failed if the AV still cannot merge to mainline when the simulation time steps have reached the maximum horizon in QEW set in Table~\ref{ddpgtable}. 
\begin{figure}[ht]
\vskip -0.1in
\begin{center}
\centerline{\includegraphics[width=9cm]{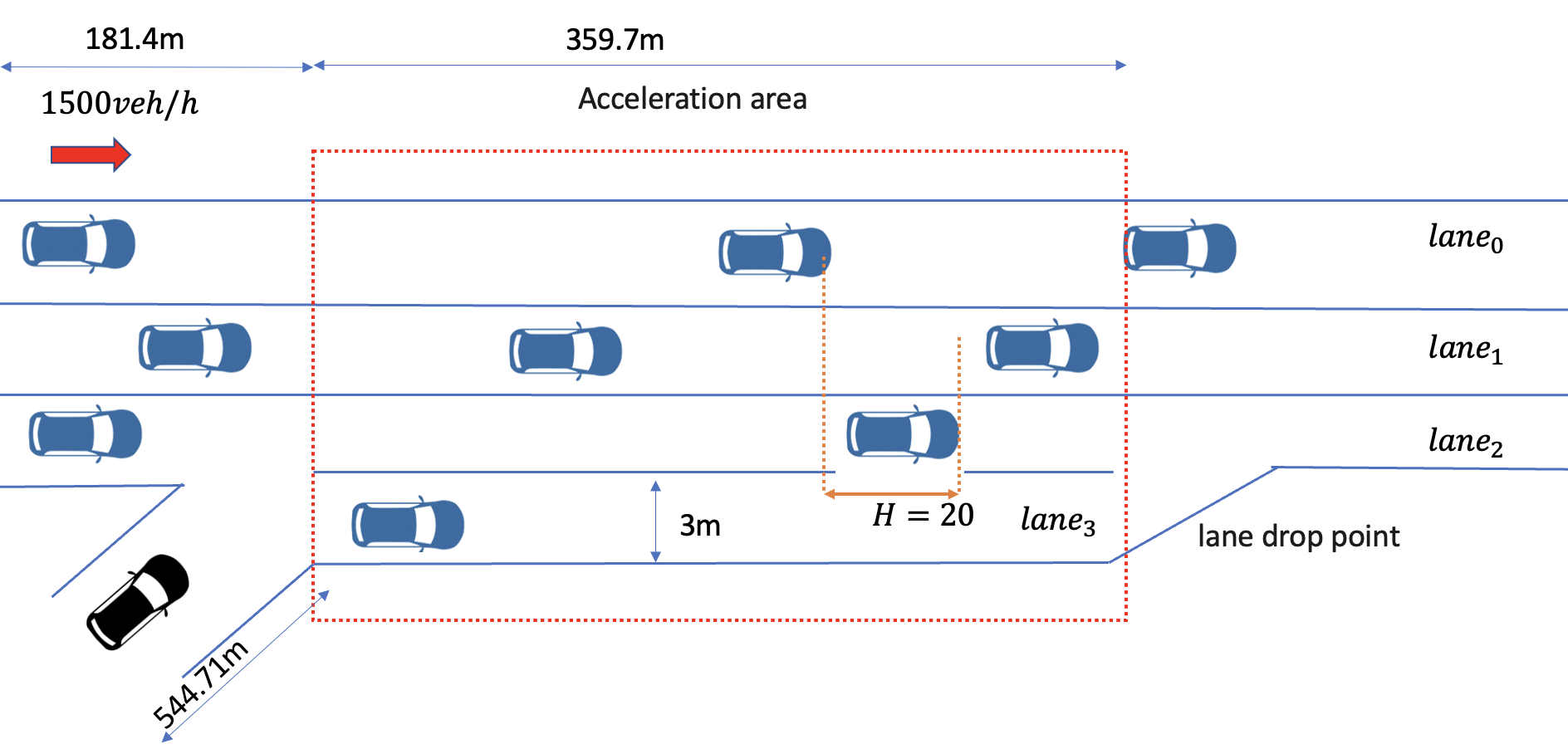}}
\caption{Diagram of the merging scenario.} 
\label{mergeqew}
\end{center}
\vspace{-0.5cm}
\end{figure}
\end{itemize}

\subsection*{Results analysis}
A simulation demo can be found at \cite{demo}.

As shown in Figure~\ref{fig:normalized-stats}, to validate the proposed framework, the training went through an evaluation phase (with 30 random seeds) every 100 epochs. The efficiency, comfort, and routing objectives were tracked during training. The results are compared with two baselines: IDM+MOBIL and Gipps+Greedy To make the results easier to understand, the values are normalized by IDM+MOBIL. At the start of training, the agent tends to blindly accelerate or decelerate. With more training steps, the agent gradually learns to optimize the speed and comfort while mitigating the route miss rate (failure rate).

\begin{figure*}[t]
\centering
{\includegraphics[width=0.9\textwidth]{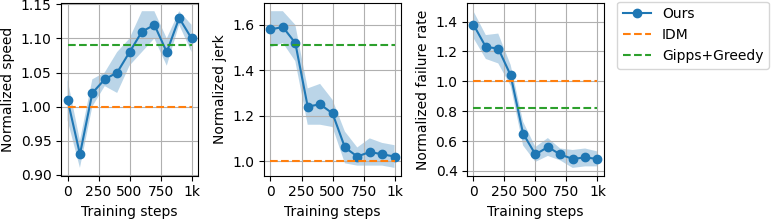}}
\caption{Speed, jerk and route-following miss (failure) rate evaluated at a fixed collection of random seeds, as a function of training iteration. The results are normalized with respect to the IDM baseline.} 
\label{fig:normalized-stats}
\end{figure*}

As in the loop network, the other RL baselines continue to have poor safety performance relative to \method, especially in emergency braking. In all experiments, given accurate estimates of reaction times and braking rates of the surrounding vehicles, \method~is able to avoid crashes altogether.

Table~\ref{table:qew-bypassing} displays the results for the bypassing scenario. This scenario is different from scenarios seen by the agent during training. \method~generalizes well. Overall, as the headway $H$ decreases, making the traffic more crowded, travel efficiency decreases as well. A very narrow $H$ makes lead vehicles hard to bypass, but a very high headway will make lead vehicles easy to bypass. Compared with rule-based controllers, such as IDM+MOBIL or Gipps+Greedy, our controller can achieve higher average speed (better efficiency) with lower jerk (better comfort). Compared with the RL-based controllers, \method~achieves better results in safety, efficiency, comfort and route-following.

Table~\ref{table:qew-emergency} shows that \method~succeeds in having low average jerk and zero crash rate in the emergency-braking scenario.

The merging scenario results in Table~\ref{table:qew-merging} are obtained \emph{without retraining the controller specifically for merging}. Thus, the controller trained on the off-ramp exit scenario is able to successfully generalize to a different route-following scenario (merging from an on-ramp). This highlights an advantage that arises from the generality of our route-following formulation.

Table~\ref{table:differentassumption} shows how efficiency and comfort depend on the reaction times of the controlled and human-driver vehicles (given by $r_E$ and $r_F$, respectively) in the challenging $H=5$ bypassing scenario. Lower reaction times enable higher efficiency control, as the lane-change constraints become less restrictive.

\section{Conclusion}

In conclusion, we address the critical challenge of safety in RL-based autonomous driving controllers. While existing RL-based controllers offer great potential in optimizing efficiency, stability, and comfort, they often lack safety guarantees. To overcome this limitation, we extend our previous work SECRM~\cite{secrm} to  \method~(the Safe, Efficient and Comfortable RL-based driving Model with Lane-Changing), incorporating hard analytic safety constraints on acceleration and lane-change actions. Through extensive simulator experiments, we demonstrate that several representative previously-published RL AV controllers are  prone to crashes, while our \method~controller successfully avoids crashes in both training and testing. Additionally, \method~has excellent performance in efficiency, comfort, safety, and route-following relative to a selection of strong learned and non-learned baselines. Finally, we achieve a good understanding of the long-term equilibrium of a platoon of \method~vehicles moving at constant speed, including obtaining a precise expression for the inter-vehicle gaps in the platoon in terms of its speed and the vehicle configuration.





\bibliographystyle{IEEEtran}
\bibliography{IEEEabrv,Bibliography}

\begin{thebibliography}{10}
\providecommand{\url}[1]{#1}
\csname url@rmstyle\endcsname
\providecommand{\newblock}{\relax}
\providecommand{\bibinfo}[2]{#2}
\providecommand\BIBentrySTDinterwordspacing{\spaceskip=0pt\relax}
\providecommand\BIBentryALTinterwordstretchfactor{4}
\providecommand\BIBentryALTinterwordspacing{\spaceskip=\fontdimen2\font plus
\BIBentryALTinterwordstretchfactor\fontdimen3\font minus \fontdimen4\font\relax}
\providecommand\BIBforeignlanguage[2]{{%
\expandafter\ifx\csname l@#1\endcsname\relax
\typeout{** WARNING: IEEEtran.bst: No hyphenation pattern has been}%
\typeout{** loaded for the language `#1'. Using the pattern for}%
\typeout{** the default language instead.}%
\else
\language=\csname l@#1\endcsname
\fi
#2}}

\bibitem{dqn}
\BIBentryALTinterwordspacing
V.~Mnih, K.~Kavukcuoglu, D.~Silver, A.~A. Rusu, J.~Veness, M.~G. Bellemare, A.~Graves, M.~Riedmiller, A.~K. Fidjeland, G.~Ostrovski, S.~Petersen, C.~Beattie, A.~Sadik, I.~Antonoglou, H.~King, D.~Kumaran, D.~Wierstra, S.~Legg, and D.~Hassabis, ``Human-level control through deep reinforcement learning,'' \emph{Nature}, vol. 518, no. 7540, pp. 529--533, 2015. [Online]. Available: \url{https://doi.org/10.1038/nature14236}
\BIBentrySTDinterwordspacing

\bibitem{lazic2018data}
N.~Lazic, C.~Boutilier, T.~Lu, E.~Wong, B.~Roy, M.~Ryu, and G.~Imwalle, ``Data center cooling using model-predictive control,'' \emph{Advances in Neural Information Processing Systems}, vol.~31, 2018.

\bibitem{li2022decision}
G.~Li, Y.~Yang, S.~Li, X.~Qu, N.~Lyu, and S.~E. Li, ``Decision making of autonomous vehicles in lane change scenarios: Deep reinforcement learning approaches with risk awareness,'' \emph{Transportation Research Part C: Emerging Technologies}, vol. 134, p. 103452, 2022.

\bibitem{secrm}
O.~Elsamadisy, T.~Shi, I.~Smirnov, and B.~Abdulhai, ``Safe, efficient and comfortable reinforcement-learning-based car-following for {AV}s with analytic safety guarantee and dynamic target speed,'' \emph{Transportation Research Record}, vol. 2678(1), pp. 643--661, 2024.

\bibitem{zhang2019discretionary}
S.~Zhang, H.~Peng, S.~Nageshrao, and E.~Tseng, ``Discretionary lane change decision making using reinforcement learning with model-based exploration,'' in \emph{2019 18th IEEE International Conference On Machine Learning And Applications (ICMLA)}.\hskip 1em plus 0.5em minus 0.4em\relax IEEE, 2019, pp. 844--850.

\bibitem{xi2020efficient}
C.~Xi, T.~Shi, Y.~Wu, and L.~Sun, ``Efficient motion planning for automated lane change based on imitation learning and mixed-integer optimization,'' in \emph{2020 {IEEE} 23rd International Conference on Intelligent Transportation Systems (ITSC)}.\hskip 1em plus 0.5em minus 0.4em\relax IEEE, 2020, pp. 1--6.

\bibitem{pan2016modeling}
T.~Pan, W.~H. Lam, A.~Sumalee, and R.~Zhong, ``Modeling the impacts of mandatory and discretionary lane-changing maneuvers,'' \emph{Transportation research part C: emerging technologies}, vol.~68, pp. 403--424, 2016.

\bibitem{shi2019driving}
T.~Shi, P.~Wang, X.~Cheng, C.-Y. Chan, and D.~Huang, ``Driving decision and control for automated lane change behavior based on deep reinforcement learning,'' in \emph{2019 IEEE intelligent transportation systems conference (ITSC)}.\hskip 1em plus 0.5em minus 0.4em\relax IEEE, 2019, pp. 2895--2900.

\bibitem{hoel2019combining}
C.-J. Hoel, K.~Driggs-Campbell, K.~Wolff, L.~Laine, and M.~J. Kochenderfer, ``Combining planning and deep reinforcement learning in tactical decision making for autonomous driving,'' \emph{IEEE transactions on intelligent vehicles}, vol.~5, no.~2, pp. 294--305, 2019.

\bibitem{cao2020highway}
Z.~Cao, D.~Yang, S.~Xu, H.~Peng, B.~Li, S.~Feng, and D.~Zhao, ``Highway exiting planner for automated vehicles using reinforcement learning,'' \emph{IEEE Transactions on Intelligent Transportation Systems}, vol.~22, no.~2, pp. 990--1000, 2020.

\bibitem{ye2020automated}
F.~Ye, X.~Cheng, P.~Wang, C.-Y. Chan, and J.~Zhang, ``Automated lane change strategy using proximal policy optimization-based deep reinforcement learning,'' in \emph{2020 IEEE Intelligent Vehicles Symposium (IV)}.\hskip 1em plus 0.5em minus 0.4em\relax IEEE, 2020, pp. 1746--1752.

\bibitem{udatha2023reinforcement}
S.~Udatha, Y.~Lyu, and J.~Dolan, ``Reinforcement learning with probabilistically safe control barrier functions for ramp merging,'' in \emph{2023 IEEE International Conference on Robotics and Automation (ICRA)}.\hskip 1em plus 0.5em minus 0.4em\relax IEEE, 2023, pp. 5625--5630.

\bibitem{vienna}
B.~Vanholme, D.~Gruyer, B.~Lusetti, S.~Glaser, and S.~Mammar, ``Highly automated driving on highways based on legal safety,'' \emph{IEEE Transactions on Intelligent Transportation Systems}, vol.~14, no.~1, pp. 333--347, 2013.

\bibitem{zhu2020safe}
M.~Zhu, Y.~Wang, Z.~Pu, J.~Hu, X.~Wang, and R.~Ke, ``Safe, efficient, and comfortable velocity control based on reinforcement learning for autonomous driving,'' \emph{Transportation Research Part C: Emerging Technologies}, vol. 117, p. 102662, 2020.

\bibitem{yen2020proactive}
Y.-T. Yen, J.-J. Chou, C.-S. Shih, C.-W. Chen, and P.-K. Tsung, ``Proactive car-following using deep-reinforcement learning,'' in \emph{2020 IEEE 23rd International Conference on Intelligent Transportation Systems (ITSC)}.\hskip 1em plus 0.5em minus 0.4em\relax IEEE, 2020, pp. 1--6.

\bibitem{shi2022bilateral}
T.~Shi, Y.~Ai, O.~ElSamadisy, and B.~Abdulhai, ``Bilateral deep reinforcement learning approach for better-than-human car-following,'' in \emph{2022 IEEE 25th International Conference on Intelligent Transportation Systems (ITSC)}.\hskip 1em plus 0.5em minus 0.4em\relax IEEE, 2022, pp. 3986--3992.

\bibitem{gipps}
P.~Gipps, ``A behavioural car-following model for computer simulation,'' \emph{Transportation Research Part B: Methodological}, vol.~15, no.~2, pp. 105--111, 1981.

\bibitem{safe-rl-survey}
W.~Zhao, T.~He, R.~Chen, T.~Wei, and C.~Liu, ``State-wise safe reinforcement learning: A survey,'' \emph{arXiv preprint arXiv:2302.03122}, 2023.

\bibitem{ddpg}
\BIBentryALTinterwordspacing
T.~P. Lillicrap, J.~J. Hunt, A.~Pritzel, N.~Heess, T.~Erez, Y.~Tassa, D.~Silver, and D.~Wierstra, ``Continuous control with deep reinforcement learning,'' in \emph{4th International Conference on Learning Representations, {ICLR} 2016, San Juan, Puerto Rico, May 2-4, 2016, Conference Track Proceedings}, Y.~Bengio and Y.~LeCun, Eds., 2016. [Online]. Available: \url{http://arxiv.org/abs/1509.02971}
\BIBentrySTDinterwordspacing

\bibitem{sumo}
P.~{\'A}. L{\'o}pez, M.~Behrisch, L.~Bieker-Walz, J.~Erdmann, Y.-P. Fl{\"o}tter{\"o}d, R.~Hilbrich, L.~L{\"u}cken, J.~Rummel, P.~Wagner, and E.~WieBner, ``Microscopic traffic simulation using {SUMO},'' \emph{2018 21st International Conference on Intelligent Transportation Systems (ITSC)}, pp. 2575--2582, 2018.

\bibitem{discdynsys}
O.~Galor, \emph{{D}iscrete {D}ynamical {S}ystems}.\hskip 1em plus 0.5em minus 0.4em\relax Springer-Verlag, 2007.

\bibitem{dynsys}
J.~Guckenheimer and P.~Holmes, \emph{{N}onlinear {O}scillations, {D}ynamical {S}ystems, and {B}ifurcations of {V}ector {F}ields}, ser. {A}pplied {M}athematical {S}ciences.\hskip 1em plus 0.5em minus 0.4em\relax Springer-Verlag, 1983, vol.~42.

\bibitem{idm}
Treiber, Hennecke, and Helbing, ``Congested traffic states in empirical observations and microscopic simulations,'' \emph{Physical Review E: Statistical Physics, Plasmas, Fluids, and Related Interdisciplinary Topics}, vol. 62 2 Pt A, pp. 1805--24, 2000.

\bibitem{treiber2016mobil}
A.~Kesting, M.~Treiber, and D.~Helbing, ``General lane-changing model {MOBIL} for car-following models,'' \emph{Transportation Research Record}, vol. 1999(1), pp. 86--94, 2007.

\bibitem{erdmann2015sumo}
J.~Erdmann, ``{SUMO}’s lane-changing model,'' in \emph{Modeling Mobility with Open Data: 2nd SUMO Conference 2014 Berlin, Germany, May 15-16, 2014}.\hskip 1em plus 0.5em minus 0.4em\relax Springer, 2015, pp. 105--123.

\bibitem{demo}
T.~Shi, ``{SECRM-2D} demo,'' 2023, \small \texttt{https://youtu.be/J5UyJXGAGX0}, Last accessed May 20, 2023.

\end{thebibliography}

\begin{appendices}
\section{Asymptotic Stability of the Follower With Constant-Speed Leader System}
\label{app:jac-eigen-detail}
Here, we would like to verify that the eigenvalues of the Jacobian matrix at the equilibrium point $(g^*, v^*)$
\[
    \lambda_{\pm} = \frac{w \pm \sqrt{w^2 - 2wd\dt - d^2\dt^2}}{2w + dr}
\]
satisfy $\left| \lambda_\pm \right| < 1$ whenever $w > 0$. We consider two cases based on whether the eigenvalues are real or complex.

\paragraph{Case $w^2 - 2wd\dt - d^2\dt^2 \geq 0$} In this case, the eigenvalues are real. First, we show that $\lambda_+ < 1$. This follows from
\begin{align*}
    \frac{w + \sqrt{w^2 - 2wd\dt - d^2\dt^2}}{2w + d\dt} &< \frac{w + \sqrt{w^2 - 2wd\dt + d^2 \dt^2}}{2w + d\dt} \\
    &= \frac{w + (w - d\dt)}{2w + d\dt} \\
    &= \frac{2w - d\dt}{2w + d\dt} < 1, \quad \mbox{as } d\dt > 0
\end{align*}
Now, we show that $\lambda_- > -1$. Since $2w + d\dt > 0$, we have $2(2w+d\dt)^2 > 0$ also. Therefore,
\[2(2w+d\dt)^2 + (w^2 - 2wd\dt - d^2\dt) > w^2 - 2wd\dt - d^2\dt\]
Collecting the terms on the left side, we have $(3w+d\dt)^2 > w^2 - 2wd\dt - d^2\dt$. Taking square roots of both sides and rearranging,
\[ w - \sqrt{w^2 - 2wd\dt - d^2\dt} > -2w - d\dt \]
Dividing both sides by $2w + d\dt$ shows that $\lambda_i > -1$. Then, the chain of inequalities
\[ -1 < \lambda_i < \lambda_+ < 1 \]
shows that $\left| \lambda_\pm \right| < 1$, as desired.

\paragraph{Case $w^2 - 2wd\dt - d^2\dt^2 < 0$} In this case, the eigenvalues are complex conjugates. It is sufficient to check that either of the eigenvalues have norm $< 1$, as complex conjugation is norm-preserving.

The squared norm of $\lambda_+$ is
\[
    \frac{w^2 + (d^2\dt^2 + 2wd\dt - w^2)}{(2w + d\dt)^2} = \frac{d^2 \dt^2 + 2wd\dt}{(2w + dr)^2}
\]
To show that $\left| \lambda_+ \right|^2 < 1$ it is equivalent to check that $d^2 \dt^2 + 2wd\dt < (2w + d\dt)^2$. Expanding the square and rearranging, the latter is equivalent to $0 < 4w^2 + 2wdr$, which is true whenever $w > 0$. It follows that $\left| \lambda_\pm \right| < 1$, as desired.

\end{appendices}

\begin{IEEEbiography}
 [{\includegraphics[width=1in,height=1.25in,clip,keepaspectratio]{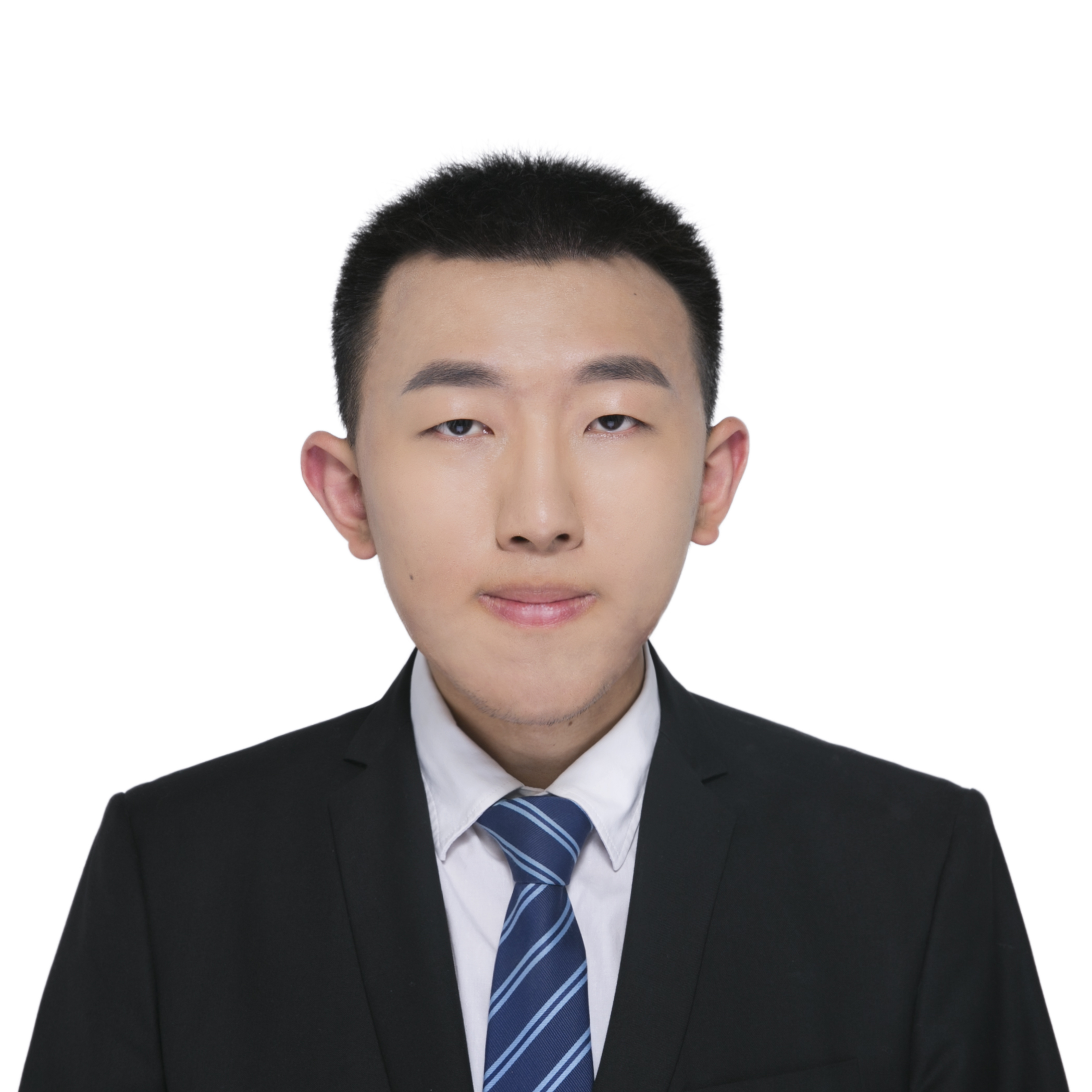}}]{Tianyu Shi} received the B.Sc. degree in Vehicle Engineering from Beijing Institute of Technology, Beijing, China, in 2019. He received master's degree from  McGill University, Montreal, Canada, in 2021. He is currently pursuing Ph.D. degree in Transportation Engineering, 
University of Toronto, Toronto, Canada.
His current research centers on reinforcement learning and intelligent transportation systems.
\end{IEEEbiography}

\begin{IEEEbiography}[{\includegraphics[width=1in,height=1.25in,clip,keepaspectratio]{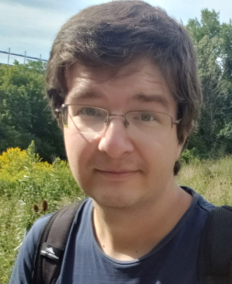}}]{Ilia Smirnov}
received the B.Sc. degree in mathematics and physics from University of Toronto, Canada, in 2010, and the M.Sc. and Ph.D. degrees in pure mathematics from Queen's University, Canada, in 2012 and 2020. He is a research associate in Civil Engineering at the University of Toronto. His research interests include artificial intelligence and optimization in intelligent transportation systems, and algebraic geometry.
\end{IEEEbiography}

\begin{IEEEbiography}[{\includegraphics[width=1in,height=1.25in,clip,keepaspectratio]{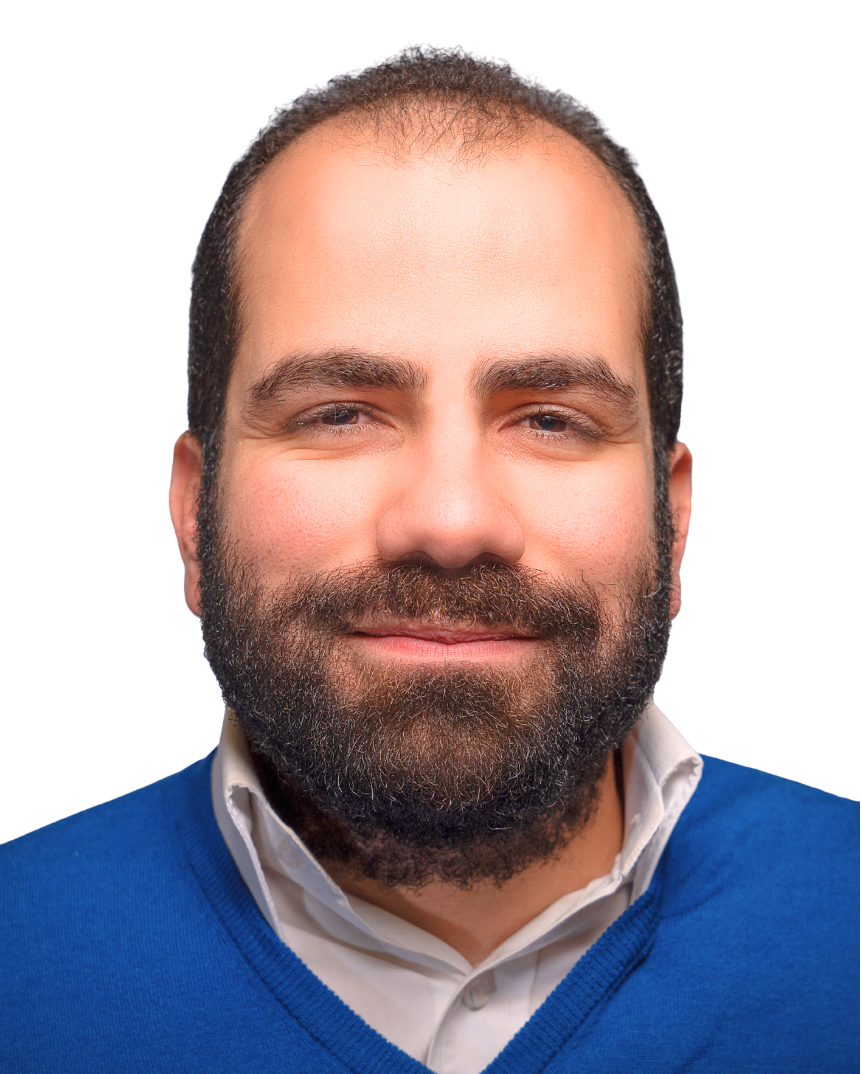}}]{Omar ElSamadisy}
 received both his B.Sc. and M.Sc. from the Arab Academy for Science and Technology, Egypt, in 2014 and 2017, respectively. He is currently a Ph.D. candidate at the University of Toronto, Canada, focusing on reinforcement learning applications in intelligent transportation systems. His research aims to optimize urban freeway operations through the integration of artificial intelligence and transportation engineering principles.
\end{IEEEbiography}

\begin{IEEEbiography}[{\includegraphics[width=1in,height=1.25in,clip,keepaspectratio]{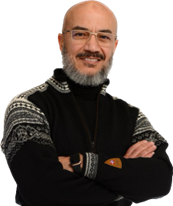}}]{Baher Abdulhai}
received the Ph.D. degree in engineering from the University of California, Irvine, CA, USA, in 1996. He is a Professor in Civil Engineering at the University of Toronto, ON, Canada. He has 35 years of experience in transportation systems engineering and Intelligent Transportation Systems (ITS). He is the founder and Director of the Toronto Intelligent Transportation System Center, and the founder and co-Director of the i-City Center for Automated and Transformative Transportation Systems (iCity-CATTS).
He received several awards including IEEE Outstanding Service Award, Teaching Excellence award, and research awards from Canada Foundation for Innovation, Ontario Research Fund, and Ontario Innovation Trust.  He served on the Board of Directors of the Government of Ontario (GO) Transit Authority from 2004 to 2006.  He served as a Canada Research Chair (CRC) in ITS from 2005 to 2010. His research team won international awards including the International Transportation Forum innovation award in 2010 (Hossam Abdelgawad), IEEE ITS 2013 (Samah El-Tantawy) and INFORMS 2013 (Samah El-Tantawy). In 2015 he has been inducted as a Fellow of the Engineering Institute of Canada (EIC). In 2018, he won the prestigious CSCE Sandford Fleming (Career Achievement) Award for his contribution to transportation in Canada. He has been elected Fellow of the Canadian Academy of Engineering in 2020. In 2021, he won the Ontario Professional Engineers Awards (OPEA) Engineering Medal for career Engineering Excellence.
\end{IEEEbiography}
\vfill

\end{document}